\def\eqref#1{equation~\ref{#1}}
\def\1{\bm{1}}
\def\eps{{\epsilon}}
\DeclareMathAlphabet{\mathsfit}{\encodingdefault}{\sfdefault}{m}{sl}
\SetMathAlphabet{\mathsfit}{bold}{\encodingdefault}{\sfdefault}{bx}{n}
\newcommand{\RR}{\mathbb{R}}
\DeclareMathOperator{\rank}{rank}
\title{Unified Privacy Guarantees for Decentralized Learning via Matrix Factorization}
\author{%
Aurélien Bellet\\
Inria, Université de Montpellier, INSERM \\ 
Montpellier, France
\AND
Edwige Cyffers\\
Institute of Science \& Technology Austria (ISTA)\\
Klosterneuburg, Austria
\AND
Davide Frey, Romaric Gaudel, Dimitri Lerévérend\thanks{Corresponding author, \url{dimitri.lereverend@inria.fr}}, François Taïani\\
University of Rennes, Inria, CNRS, IRISA\\
Rennes, France\\
}
\newtheorem{theorem}{Theorem}
\newtheorem{remark}[theorem]{Remark}
\newtheorem{definition}[theorem]{Definition}
\newtheorem{lemma}[theorem]{Lemma}
\newcommand{\sys}[0]{\textsc{Mafalda}-SGD\xspace}
\newcommand{\Antipgd}[0]{\textsc{Anti-PGD}\xspace}
\DeclareMathOperator*{\sens}{sens}
\DeclareMathOperator*{\trace}{tr}
\DeclareMathOperator*{\clip}{clip}
\newcommand{\R}[1]{\mathbb{R}^{#1}}
\newcommand{\transpose}[1]{#1^{\top}}
\newcommand{\pseudoinverse}[1]{#1^{\dagger}}
\newcommand{\intervalintegers}[2]{\left\llbracket#1,#2\right\rrbracket}
\newcommand{\norm}[1]{\left\lVert#1\right\rVert}
\newcommand{\kronecker}[0]{\otimes}
\providecommand{\1}{\bm{1}}
\newcommand{\gaussdistr}[2]{\mathcal{N}\left(#1,#2\right)}
\newcommand{\id}[0]{I}
\newcommand{\identity}[1]{\id_{#1}}
\newcommand{\modelsize}[0]{d}
\newcommand{\nbparams}[0]{\modelsize}
\newcommand{\half}[0]{\frac{1}{2}}
\newcommand{\timeexponentvar}[2]{{#2}_{#1}}
\newcommand{\lr}[0]{\eta}
\newcommand{\nodeset}[0]{\mathcal{V}}
\newcommand{\nbnodes}[0]{n}
\newcommand{\nbnoise}[0]{m}
\newcommand{\nbobs}[0]{\Phi}
\newcommand{\gossmatrix}[1]{\timeexponentvar{#1}{W}}
\newcommand{\messmatrix}[0]{\widetilde{W}}
\newcommand{\workloadgradient}[1]{\timeexponentvar{#1}{\mathbf{W}}}
\newcommand{\workloadmessages}[1]{\timeexponentvar{#1}{\widetilde{\mathbf{W}}}}
\newcommand{\avgmatrix}[0]{\widetilde{M}}
\newcommand{\projection}[0]{\mathbf{P}}
\newcommand{\projectionmesspndp}[0]{\mathbf{K}_{\text{PNDP}}^{\mathcal{S}}}
\newcommand{\encodermatrix}[0]{\mathbf{C}}
\newcommand{\model}[1]{\timeexponentvar{#1}{\theta}}
\newcommand{\permutation}[0]{\mathbf{K}}
\newcommand{\mechanism}[0]{\mathcal{M}}
\newcommand{\attacker}{\mathcal{A}}
\newcommand{\newstuff}[1]{{\color{red}#1}}
\acrodef{DL}{Decentralized Learning}
\acrodef{ML}{Machine Learning}
\acrodef{MF}{Matrix Factorization}
\acrodef{D-PSGD}{Decentralized Parallel Stochastic Gradient Descent}
\acrodef{FL}{Federated Learning}
\acrodef{SGD}{Stochastic Gradient Descent}
\acrodef{IID}{Independent and Identically Distributed}
\acrodef{non-IID}{non Independent and Identically Distributed}
\acrodef{RMSE}{Root Mean Square Error}
\acrodef{DP}{Differential Privacy}
\acrodef{LDP}{Local Differential Privacy}
\acrodef{CDP}{Central Differential Privacy}
\acrodef{RDP}{Rényi Differential Privacy}
\acrodef{zCDP}{zero-Concentrated Differential Privacy}
\acrodef{GDP}{Gaussian Differential Privacy}
\acrodef{AUC}{Area Under the Curve}
\acrodef{MIA}{Membership-Inference Attack}
\acrodef{PNDP}{Pairwise Network Differential Privacy}
\acrodef{SecLDP}{Secret-based \ac{LDP}}
\acrodef{SecRDP}{Secret-based local Rényi differential privacy}
\acrodef{DP-FTRL}{Differentially Private Follow-The-Regularized-Leader}
\acrodef{BSR}{Banded Square Root}
\newcommand{\disableAcronymHyperlink}{%
  \def\AC@hyperlink##1##2{##2}%
  \def\AC@hyperref[##1]##2{##2}%
  \def\AC@hypertarget##1##2{##2}%
  \def\AC@phantomsection{}%
}
\renewcommand{\newstuff}[1]{#1}
\begin{document}

\maketitle

\begin{abstract}
\looseness=-1 Decentralized Learning (DL) enables users to collaboratively train models without sharing raw data by iteratively averaging local updates with neighbors in a network graph. This setting is increasingly popular for its scalability and its ability to keep data local under user control. Strong privacy guarantees in DL are typically achieved through Differential Privacy (DP), with results showing that DL can even amplify privacy by disseminating noise across peer-to-peer communications.
Yet in practice, the observed privacy-utility trade-off often appears worse than in centralized training, which may be due to limitations in current DP accounting methods for DL. In this paper, we show that recent advances in centralized DP accounting based on Matrix Factorization (MF) for analyzing temporal noise correlations can also be leveraged in DL. By generalizing existing MF results, we show how to cast both standard DL algorithms and common trust models into a unified formulation. This yields tighter privacy accounting for existing DP-DL algorithms and provides a principled way to develop new ones. To demonstrate the approach, we introduce MAFALDA-SGD, a gossip-based DL algorithm with user-level correlated noise that outperforms existing methods on synthetic and real-world graphs.
\end{abstract}

\section{Introduction}\label{sec:intro}

In Decentralized Learning (DL), participants collaboratively train a shared model by exchanging model parameters on a peer-to-peer communication graph~\citep{lian2017dpsgd,lian2018asynchronous,koloskovaUnifiedTheoryDecentralized2020,beltran2022decentralized,tian2023robust,yuanDecentralizedFederatedLearning2024}.
By removing the central server, DL offers faster deployment, higher scalability, and improved robustness. Since data remains local and each participant sees only a subset of communications, DL is also motivated by privacy concerns~\citep{cyffers:hal-03906735,cyffersMuffliatoPeertoPeerPrivacy2022,MAL-083}. However, decentralization alone is insufficient to ensure privacy, as exchanged messages can still leak sensitive information that enables inference or reconstruction of local data~\citep{10.5555/3692070.3693551, pasquiniSecurityPeertoPeerDecentralized2023}. Data protection thus requires additional mechanisms beyond decentralization.

Differential Privacy (DP)~\citep{dwork:2006:calibrating} is the gold standard for privacy in machine learning. DP protects participation by ensuring that outputs do not depend too strongly on any single data point, typically through noise injection. Extending DP to Decentralized Learning is challenging: unlike in centralized settings, where a trusted curator releases only the final output, DL exposes intermediate peer-to-peer messages to participants.
A natural option is to analyze DP-DL algorithms under Local DP~\citep{localdp,belletPersonalizedPrivatePeertoPeer2018,wangPersonalizedPrivacyPreservingData2015}, which assumes all messages are released publicly, but this overly conservative trust model often leads to poor privacy–utility trade-offs~\citep{chanOptimalLowerBound2012}.
More realistic models, such as Pairwise Network DP (PNDP)~\citep{cyffersMuffliatoPeertoPeerPrivacy2022,cyffers:hal-03906735} and Secret-based LDP (SecLDP)~\citep{allouahPrivacyPowerCorrelated2024}, have recently been proposed to account for adversaries with only partial knowledge of the system, showing that decentralization can amplify privacy guarantees relative to Local DP.
Unfortunately, despite these recent advances, designing DP-DL algorithms and analyzing their privacy guarantees remains a difficult and technical endeavor, and existing results rely on ad hoc proofs tailored to specific algorithms and trust models~\citep{cyffers:hal-04610660,biswasLowCostPrivacyPreservingDecentralized2025, allouahPrivacyPowerCorrelated2024}, rather than on a general, principled approach. There is also room to obtain tighter privacy guarantees by more carefully accounting for correlated noise arising from redundant exchanges between nodes, both in parallel and across timesteps---an aspect largely overlooked in existing analyses, which can lead to overly pessimistic bounds.

In this work, we address these limitations by introducing a unified formulation for analyzing privacy guarantees in DP-DL, building on recent advances in centralized privacy accounting via the Matrix Factorization (MF) mechanism~\citep{50309,pillutla2025correlatednoisemechanismsdifferentially}. MF uses a well-chosen factorization of a workload matrix representing the algorithm to exploit noise correlations and achieve stronger privacy–utility trade-offs, but has been so far applied only in centralized contexts.
We propose a framework that make MF applicable to ensuring DP in decentralized learning.
Achieving this requires addressing several fundamental challenges. First, decentralized learning algorithms must be encoded as a single tractable matrix multiplication, a formulation that has not been achieved in prior work. Second, the diversity of trust models requires disentangling the matrix governing privacy guarantees from the one driving optimization schemes---two quantities that collapse into a single object in the centralized setting. Finally, incorporating these elements requires generalizing MF to richer classes of workload matrices and constraints. Our framework overcomes these challenges, providing a principled foundation to analyze existing DP-DL algorithms across different trust models and to design new ones.
Specifically, our contributions are as follows:
\begin{enumerate}[label=(\roman*),nosep]%
    \item We generalize existing privacy guarantees for MF to broader settings;  
    \item We show how existing DP-DL algorithms and trust models can be analyzed as specific instances of our generalized MF framework;  
    \item We use our framework to design a new algorithm, \sys; and  
    \item We evaluate our approach on synthetic and real-world graphs and datasets, demonstrating tighter privacy guarantees for existing algorithms and superior performance of \sys compared to prior methods.  
\end{enumerate}

\section{Related work}\label{sec:sota}

\textbf{Matrix factorization mechanism.}
The Matrix Factorization (MF) mechanism for SGD first appeared as a generalization of correlated noise in the online setting~\citep{kairouzPracticalPrivateDeep2021}, based on tree mechanisms~\citep{continualobservation,pmlr-v23-jain12}, and was later formalized by \citet{denisovImprovedDifferentialPrivacy2022}. More recent work improves the factorization either by designing more efficient decompositions~\citep{choquette-chooCorrelatedNoiseProvably2023}, or by allowing additional constraints on the strategy matrix, i.e., the matrix encoding locally applied noise correlations on the gradient. Existing constraints typically enforce a limited window for temporal correlations via banded matrices, which reduces computation~\citep{choquette,nikita2024banded}, whereas in this work our constraint in \textsc{Mafalda}-SGD enforces that correlations occur only within nodes. Other works extend the mechanism to more complex participation schemes~\citep{choquette-chooMultiEpochMatrixFactorization2023}. For a more comprehensive survey, we refer the reader to \citet{pillutla2025correlatednoisemechanismsdifferentially}.

\textbf{Differentially private decentralized learning.}
Private decentralized learning was first studied under the trust model of Local DP~\citep{belletPersonalizedPrivatePeertoPeer2018,dldp,huang2015}, which suffers from poor utility due to overly pessimistic assumptions: participants are only protected by their own noise injection, since all messages are assumed to be known to the attacker. Relaxed trust models have since gained traction, such as Pairwise Network Differential Privacy (PNDP)~\citep{cyffersMuffliatoPeertoPeerPrivacy2022}, initially introduced for gossip averaging and later extended to several algorithms~\citep{cyffers:hal-04610660,Li2025Mitigating,biswasLowCostPrivacyPreservingDecentralized2025}. PNDP assumes that the attacker is a node participating in the algorithm, and that it is thus sufficient to protect what a node observes during the execution—typically only the messages it sends or receives. Similarly, SecLDP~\citep{allouahPrivacyPowerCorrelated2024} enforces DP conditional on a set of secrets that remain hidden from the attacker.
The idea of correlating noise has a long history in DL, first as an obfuscation method against neighbors without formal guarantees~\citep{Mo2017,gupta2020informationtheoreticprivacydistributedaverage}. More recent works provide DP guarantees for algorithms with correlated noise~\citep{sabaterAccurateScalableVerifiable2022,allouahPrivacyPowerCorrelated2024,biswasLowCostPrivacyPreservingDecentralized2025}, but correlations are not optimized and instead dictated by what can be proven, which explains why our method \sys significantly outperforms them. Finally, the concept of representing attacker knowledge in DL via a matrix that can be constructed algorithmically was introduced by \citet{10.5555/3692070.3693551} and more recently studied by \citet{koskelaDifferentialPrivacyAnalysis2025}, but only for specific cases of algorithms and trust models, and without establishing a connection to the matrix factorization mechanism.

\section{Background: Matrix Factorization Mechanism}\label{sec:formalism}

In this section, we review the centralized version of Differentially Private Stochastic Gradient Descent (DP-SGD) and its correlated noise variant based on matrix factorization (MF-SGD)~\citep{denisovImprovedDifferentialPrivacy2022}, %
which we will later adapt to the decentralized case.

Differential Privacy (DP) \citep{dwork:2006:calibrating} ensures that the output of a mechanism does not reveal too much information about any individual data record in its input.
A mechanism satisfies DP if its output distribution remains close on any two neighboring datasets $\mathcal{D}$ and $\mathcal{D}'$ that differ in one record (denoted $\mathcal{D}\simeq \mathcal{D}'$). Although DP is classically defined in terms of $(\eps,\delta)$-DP, in this paper we adopt Gaussian DP (GDP), which is particularly well suited to mechanisms with Gaussian noise. GDP offers tighter privacy accounting, simpler analysis, and can be converted back into Rényi DP or $(\eps,\delta)$-DP \citep{Dong22GDP}.

\begin{definition}[Gaussian Differential Privacy (GDP) --- \citealt{Dong22GDP,pillutla2025correlatednoisemechanismsdifferentially}]\label{def:GDP}  
A randomized mechanism $\mathcal{M}$ satisfies $\mu$-\emph{Gaussian Differential Privacy} ($\mu$-GDP) if, for any neighboring datasets $\mathcal{D} \simeq \mathcal{D}'$, there exists a (possibly randomized) function $h$ %
such that  
\[
h(Z) \;\overset{d}{=}\; \mathcal{M}(\mathcal{D}), \quad Z \sim \mathcal{N}(0,1), 
\qquad
h(Z') \;\overset{d}{=}\; \mathcal{M}(\mathcal{D}'), \quad Z' \sim \mathcal{N}(\mu,1),
\]  
where $\overset{d}{=}$ denotes equality in distribution.  
\end{definition}  

In practice, Gaussian DP is obtained by %
adding appropriately calibrated Gaussian noise, which is the central building block of the centralized DP-SGD algorithm. Given a training dataset $\mathcal{D}$, a learning rate $\eta>0$, and starting from an initial model $\theta_0\in\mathbb{R}^d$, the DP-SGD update can be written as:
\begin{equation}
    \theta_{t+1} = \theta_t - \eta \left(g_t + z_t \right), \quad g_t = \clip \left(\nabla f(\theta_t, \xi_t),\Delta\right), \quad z_t \sim \mathcal{N}(0, \Delta^2 \sigma^2\identity{d}),
    \label{eq:dpsgd}
\end{equation}
with $\xi_t$ a sample from $\mathcal{D}$ and $\Delta = \sup_{\mathcal{D} \simeq \mathcal{D}'} \|\nabla f(\mathcal{D}) - \nabla f(\mathcal{D}') \|_2$ the sensitivity of $\nabla f$ enforced by the clipping operation.
Even for this simple version of DP-SGD without correlated noise, we can define a so-called workload matrix $A^{\text{pre}} \in \mathbb{R}^{T\times T}$ with $A_{ij}^{\text{pre}} = 1_{i\geq j}$, such that:
\begin{equation}\label{eq:DP-SGD workload}
\theta = 1_T \kronecker \theta_0-(A^{\text{pre}}G+Z), \quad G \in \mathbb{R}^{T\times d},    
\end{equation}
where $\theta$ is the matrix of stacked $\theta_1,...,\theta_T$, $G$ is the matrix of stacked gradients and $Z\sim \mathcal{N}(0, \sigma^2 \Delta )^{T\times d}$.
Here $G$ corresponds to the data-dependent part of the algorithm, and $\theta$ corresponds to the outputs of DP-SGD. We thus aim to protect $G$ as much as possible by adding a large variance vector $Z$, while still achieving the best possible final $\theta_T$, typically by keeping $Z$ as small as possible.  

\looseness=-1 To reach these contradictory goals, the matrix factorization mechanism exploits the postprocessing property of DP---which ensures that if a mechanism $\mathcal{M}$ is $\mu$-GDP, then for every function $f$, $f \circ \mathcal{M}$ is also $\mu$-GDP---to allow noise correlation across iterations and improve the privacy–utility trade-off. For any factorization $A^{\text{pre}} = BC$, one can rewrite $A^{\text{pre}} G + B Z$ as $ A^{\text{pre}} (G + C^{\dagger}Z)$, where the multiplication by $A^{\text{pre}}$ is seen as a post-processing of the term $(G + C^{\dagger}Z)$ and $C^{\dagger}$ is the pseudo-inverse of $C$. This factorization leads to a slight modification of DP-SGD, resulting in the MF-SGD updates:
\begin{equation}\label{eq:dp-sgd}
\textstyle\theta_{t+1} = \theta_t - \eta \big(g_t + \sum_{\tau = 0}^t C^{\dagger}_{t,\tau} z_{\tau} \big).
\end{equation}
To derive DP guarantees for MF-SGD, it suffices to analyze the privacy guarantees of $(G + C^{\dagger}Z)$. For neighboring datasets $\mathcal{D}\simeq \mathcal{D}'$ differing only in one record $x$, differences in $G$ occur whenever $x$ is used, which may happen at multiple timesteps if the dataset is cycled through several times. The \emph{participation scheme} $\Pi$ is the set of all \emph{participation patterns} $\pi$, encoding whether $x$ participates in a given row of $G$ \citep[see Chapter 3 in][]{pillutla2025correlatednoisemechanismsdifferentially}. This extends the neighboring relation from datasets to gradients: $G\simeq_\Pi G'$. The sensitivity is then the worst case over all participation patterns:
\begin{equation}
\sens_\Pi(C):=\sens_{\Pi}(G \mapsto CG)=\max _{G \simeq_\Pi G'}\left\|C\left(G-G'\right)\right\|_{\mathrm{F}}.
\label{eq:sensibility_centralized}
\end{equation}
For a fixed sensitivity $\sens_\Pi(C)$, generating $Z \sim \mathcal{N}(0, \sigma^2 \sens_\Pi(C)^2)^{T\times d}$ ensures that $(G + C^{\dagger}Z)$ is $\tfrac{1}{\sigma}$-GDP.
Optimizing the noise correlation is typically done by minimizing $\sens(C)^2\norm{B}^2$ under the constraint $A^{\text{pre}} = BC$,
where $\norm{B}^2$ represents the utility of the factorization.
While the privacy analysis is straightforward if $G$ and $G'$ are fixed in advance, the guarantee remains valid when $G$ is computed \emph{adaptively} through time, with $g_t$ depending on $g_1,\dots, g_{t-1}$. This has been proven only for workload matrices that are square, lower triangular, and full rank~\citep{denisovImprovedDifferentialPrivacy2022,pillutla2025correlatednoisemechanismsdifferentially}. We show in \Cref{sec:MFDP} that the privacy guarantees also hold in a more general case, thereby enabling adaptivity in DL as well.

\section{Decentralized Learning as a Matrix Factorization Mechanism}\label{sec:DL as MF}

In this section, we show that many DL algorithms can be cast as instances of the matrix factorization mechanism. To do so, we must address several challenges: encoding the updates of the algorithms, representing the sensitive information, and capturing what is known to a given attacker in a form suitable for analyzing DP guarantees. For concreteness, we first show in \Cref{sec:wam-up:MF-D-SGD} how to proceed for a specific yet already fairly general algorithm, which we call MF-D-SGD, under Local DP, and then further generalize to a large class of DL algorithms and trust models in Section~\ref{sec:general:formulation}.

\subsection{Warm-up: MF-D-SGD under Local DP}
\label{sec:wam-up:MF-D-SGD}
We introduce the decentralized setting studied in our work and build step-by-step the encoding of a Decentralized SGD (D-SGD) algorithm where DP is introduced in a similar way as in MF-SGD. We refer to this algorithm as MF-D-SGD and present its pseudocode from the users' perspective in \Cref{alg:noisy DL} and from the matrix perspective in \Cref{fig:recap}.

We consider that $n$ users, represented as nodes of a graph $\mathcal G = (\mathcal V,\mathcal E)$, collaboratively minimizing a loss function by alternating gradient steps computed on their local datasets $(\mathcal{D}_u)_{u\in V}$ (\cref{line:clipping,line:local:update}) and gossiping steps, i.e., averaging their local models with their neighbors (\cref{line:gossip:send}--\ref{line:gossip:average}).
In DL, two datasets $\mathcal{D} =(\mathcal{D}_u)_{u\in V}$ and $\mathcal{D}' =(\mathcal{D}'_u)_{u\in V}$ are neighboring if they differ in exactly one record at a single node $u$, i.e., $\mathcal{D}_v = \mathcal{D}'_v$ for all $v \neq u$ and ${\cal D}'_u = {\cal D}_u \setminus \{x\} \cup \{x'\}$. We write this as $\mathcal{D} \simeq_u \mathcal{D}'$.

\begin{algorithm}[t]
\caption{MF-D-SGD: Matrix Factorization Decentralized Stochastic Gradient Descent}\label{alg:noisy DL}
\begin{algorithmic}[1]
\Require $W\in\RR^{n\times n}$, $C$, $T$, $\Delta_g$, $\sigma$, $\model{0}\in\RR^{n\times d}$, $Z \sim \gaussdistr{0}{\Delta_g^2 \sigma^2}^{nT\times d}$ 
\ForAll{node $u$ \textbf{in parallel}}
    \For{$t = 1$ \textbf{to} $T$}
        \State $g_t^{(u)} \gets \clip\big[\nabla f_u(\model{t}^{(u)}, \xi_t^{(u)}), \Delta_g \big]$ with $\xi_t^{(u)}\sim \mathcal{D}_u$ \Comment{Clipped gradient}\label{line:clipping}
        \State $\model{t+\frac{1}{2}}^{(u)} \gets \model{t}^{(u)} - \lr \big(g_t^{(u)} + (\pseudoinverse{C}Z)_{[nt + u]}\big)$ \Comment{Local update}\label{line:local:update}
        \State Send $\model{t+\half}^{(u)}$ to all neighbors $v \in \Gamma_u$\label{line:gossip:send}
        \State Receive $\model{t+\frac{1}{2}}^{(v)}$ from all neighbors $v\in \Gamma_u$\label{line:gossip:receive}
        \State $\model{t+1}^{(u)} \gets \sum_{v\in \Gamma_u} W_{[u,v]}\model{t+\frac{1}{2}}^{(v)}$ \Comment{Local average}\label{line:gossip:average}
    \EndFor
\EndFor
\State \Return $\model{T+1}^{(u)}, \forall u\in\intervalintegers{1}{n}$
\end{algorithmic}
\end{algorithm}

\textbf{Model updates.} We stack the local parameters of the $n$ nodes as a vector $\theta_t \in \mathbb{R}^{n \times d}$, where $\theta_t^{(u)}$ is the local parameter of node $u$ at time $t \in \intervalintegers{0}{T-1}$. The gradient step can then be written as
$\theta_{t+\half} = \theta_t - \eta\, \bigl(G_t + \pseudoinverse{C}_tZ\bigr),$ 
where each row $u$ of $G_t$ is
the vector $G_t^{(u)} = \transpose{\nabla f_u\bigl(\theta_t^{(u)}, \xi_t^{(u)}\bigr)}$, 
and $\pseudoinverse{C}_t \in \RR^{n \times \nbnoise}$ is the block of $\pseudoinverse{C}$ from the $(tn+1)$-th row to the $(t+1)n$-th one. $\pseudoinverse{C}_tZ$ captures arbitrary noise correlation, and setting $C = \identity{nT}$ recovers the local updates of DP-D-SGD, the simple decentralized SGD with uncorrelated noise. After the local updates, nodes perform a gossiping step, exchanging their models with neighbors and averaging the intermediate models $\theta_{t+\tfrac12}$ using the gossip matrix $W$.
\begin{definition}[Gossip matrix]
    A gossip matrix $W \in \R{n \times n}_{\geq 0}$ on the graph $G = (\mathcal{V},\mathcal{E})$ is a stochastic matrix, i.e., $\sum_{v=1}^n W_{uv}=1$, and if $W_{uv} > 0$ then $v$ is a neighbor of $u$, denoted $v \in \Gamma_u$.
\end{definition}  
With this definition, the gossiping step can be written as $\theta_{t+1} = W \theta_{t+\half}$. Thus, one full iteration of MF-D-SGD corresponds to $\theta_{t+1} = W \bigl(\theta_t - \eta\, (G_t + \pseudoinverse{C}_tZ)\bigr)$.

We now stack all steps of MF-D-SGD over $T$ iterations. This requires defining a matrix $M$ with stacked powers of $W$ (starting with power $0$) and a larger matrix $\workloadgradient{T} \in \mathbb{R}^{nT \times nT}$ with blocks of size $n\times n$ in lower triangular Toeplitz form,
where the main diagonal is filled with $I_n$ and the $i$-th diagonal with $W^{\,i-1}$. The whole algorithm can then be summarized as:
\begin{equation}\label{eq:MF-D-SGD}
\theta \;=\;  (I_T \otimes W)\left(M \theta_0 - \eta\, \workloadgradient{T}\bigl(G+ \pseudoinverse{C}Z\bigr)\right),
\end{equation}
where $\theta$ and $G$ denote the collections of local models and gradients, respectively, after $T$ iterations.

\begin{remark}
    Up to this point, we have assumed that the gossip matrix $W$ is fixed across iterations for simplicity's sake (notably when explaining \cref{fig:recap}). %
    In the remainder of the paper we move to the more general case of time-varying gossip matrices $W_t$. This change only requires replacing powers of a single matrix gossip $W$ by products of matrices in the expression of $\workloadgradient{T}$. See \cref{subsec:time-varying gossip notations} for details of this more general expression.
\end{remark}

\textbf{Attacker knowledge.} A subtlety arises from the fact that under decentralized trust models, an attacker typically does not observe the sequence of local models $\theta$ directly.
Hence, we should not apply the MF mechanism to \Cref{eq:MF-D-SGD}, but rather the peer-to-peer messages. Under the Local DP trust model, where all messages exchanged by the nodes throughout the algorithm are assumed to be public, the view $\mathcal{O}_\attacker$ of the attacker can be seen as a MF mechanism:
\begin{equation}\label{eq:MF-D-SGD sensitive information}
\mathcal{O}_\attacker \;:=\; \workloadgradient{T}\left(G+ \pseudoinverse{C}Z\right)
.
\end{equation}

\begin{figure}[t]
    \centering %
    \includegraphics[width=\linewidth,clip]{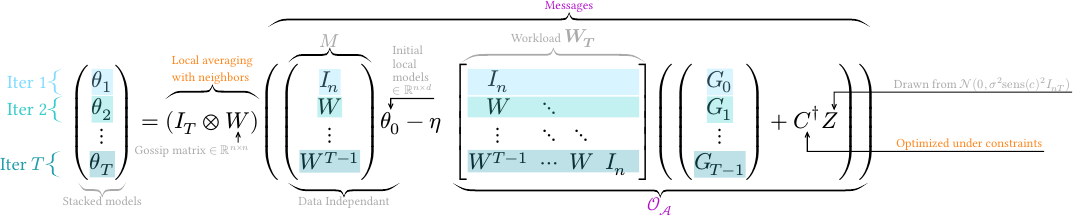}
    \caption{Overview of the MF-D-SGD algorithm written as a single equation. }%
    \label{fig:recap}
\end{figure}

\subsection{General Formulation}
\label{sec:general:formulation}
The factorization obtained in previous section for MF-D-SGD requires two assumptions:
\begin{enumerate*}[label=(\roman*)]
    \item at each iteration, every node sends the same message to all of its neighbors, and
    \item the attacker observes all messages shared on the network.
\end{enumerate*} To encompass a broader range of algorithms and trust models, this section proposes more general definitions for both. To the best of our knowledge, these definitions cover all existing DP-DL algorithms and trust models.

\begin{definition}[Linear DL algorithm]\label{def:linear DL algorithm}
A decentralized learning algorithm is \emph{linear} if all observable quantities can be expressed as a linear combination of the concatenated gradients $G$ and the concatenated noise $Z$.
\end{definition}

This definition covers the Zip-DL algorithm, in which a user $u$ applies distinct yet correlated noises to its communications within the same gossiping step~\citep{biswasLowCostPrivacyPreservingDecentralized2025}.
We complete our framework by defining trust models on linear DL algorithms as follows.

\begin{definition}[Attacker knowledge]\label{def:attacker observations}
Given a Linear DL algorithm, the \emph{attacker knowledge} of a trust model, i.e., the information the attacker $\attacker$ derives from its observations, can be expressed as $\mathcal{O}_\attacker \;:=\; AG+BZ$. 
\end{definition} 
This definition encompasses not only scenarios in which knowledge is obtained by observing messages or models, but also situations where the attacker sees only a subset of them, as in PNDP. It further allows encoding:
\begin{enumerate*}[label=(\roman*)]
\item gradients already known to the attacker, e.g., when a user knows its own gradients or those of colluding users (in which case the corresponding columns of $A$ are $0$);
\item information available about the noise, e.g., when a user knows the noise it added, or in SecLDP when some neighbors' noise values are known (in which case the corresponding columns of $B$ are $0$), as shown by the following theorem.
\end{enumerate*}

Remarkably, these general definitions not only capture all widely used trust models and algorithms, but the existence of a corresponding factorization for these cases is also guaranteed, as established by the following theorem.
\begin{theorem}\label{thm:MF from reduced attacker knowledge}
For each of the three trust models—LDP, PNDP, and SecLDP—and for all existing DL algorithms, there exist matrices $A$, $B$, and $C$ such that the attacker knowledge can be expressed as $\mathcal{O}_\attacker = AG + BZ$ with $A = BC$.
\end{theorem}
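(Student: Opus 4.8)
The plan is to separate the two halves of the claim. The existence of $A$ and $B$ is already furnished by \Cref{def:attacker observations}: linearity (\Cref{def:linear DL algorithm}) forces the view $\mathcal{O}_\attacker$ to be a fixed linear image of the stacked gradients $G$ and noise $Z$, so the only substantive task is to produce, for each trust model, a matrix $C$ with $A = BC$. I would first recall the elementary fact that the equation $A = BC$ admits a solution $C$ if and only if $\mathrm{col}(A) \subseteq \mathrm{col}(B)$, in which case the canonical witness is $C = \pseudoinverse{B}A$ (because $B\pseudoinverse{B}$ is the orthogonal projector onto $\mathrm{col}(B)$, so $B\pseudoinverse{B}A = A$ exactly when every column of $A$ lies in $\mathrm{col}(B)$). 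The theorem thus reduces to checking this single containment for each algorithm/trust-model pair.

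\textbf{The noise rides along with the gradient.}
I would then isolate the structural reason the containment holds. In every linear DL algorithm considered, the per-step perturbation enters as $G + \pseudoinverse{C}Z$ \emph{before} any further linear processing, exactly as in \Cref{eq:MF-D-SGD,eq:MF-D-SGD sensitive information}. Hence whatever linear map $L$ the execution applies to form an on-network quantity, that quantity equals $L(G + \pseudoinverse{C}Z) = LG + L\pseudoinverse{C}Z$. Reading off $A = L$ and $B = L\pseudoinverse{C}$ gives the factorization directly: $BC = L\pseudoinverse{C}C = L = A$, using that the algorithm's own strategy matrix $C$ has full column rank so $\pseudoinverse{C}C = I$. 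More generally, if a trust model only exposes a further linear selection $\projection$ of the on-network quantities, then $A = \projection L$ and $B = \projection L\pseudoinverse{C}$, and the \emph{same} strategy matrix $C$ still satisfies $A = BC$, so $\mathrm{col}(A)\subseteq\mathrm{col}(B)$ holds automatically.

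\textbf{Instantiating the three trust models.}
Next I would build $\projection$ explicitly in each case. For LDP the attacker sees every message, so $L = \workloadgradient{T}$ and $\projection = I$, and the containment is the identity from the previous paragraph. For PNDP the attacker observes only the messages incident to its (possibly colluding) nodes, so $\projection$ is a $0/1$ row-selection matrix; gradients already known to the attacker are handled by zeroing the corresponding columns of $A$, which only shrinks $\mathrm{col}(A)$ and therefore cannot break the containment. For SecLDP the attacker additionally knows a secret subset $\mathcal{S}$ of noise realizations; these appear in the view as deterministic offsets and are removed by post-processing, which amounts to deleting the columns of $B$ indexed by $\mathcal{S}$. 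The factorization must then survive with the reduced matrix $B_{\bar{\mathcal S}}$, i.e.\ one needs $\mathrm{col}(A) \subseteq \mathrm{col}(B_{\bar{\mathcal S}})$.

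\textbf{Where the difficulty sits.}
The genuinely delicate case is SecLDP, since deleting secret noise columns can a priori destroy the containment established above. The crux is to show that the non-secret columns of $\pseudoinverse{C}$, after the selection $\projection L$, still span every direction in which a gradient reaches the attacker, i.e.\ $\mathrm{col}(\projection L) \subseteq \mathrm{col}\bigl(\projection L(\pseudoinverse{C})_{:,\bar{\mathcal S}}\bigr)$. I expect this to follow from the defining property of the secrecy model---each observed gradient is masked by at least one non-secret noise term---but verifying it requires tracking how the gossip operator $W$ and its powers mix the per-node contributions across the $T$ rounds inside the stacked matrices $\workloadgradient{T}$ and $M$, which is the only case-specific and notation-heavy part of the argument. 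The LDP and PNDP statements then fall out as the specializations in which $\bar{\mathcal S}$ is the entire noise index set.
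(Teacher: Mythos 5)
Your opening reduction (the equation $A=BC$ is solvable iff $\mathrm{col}(A)\subseteq\mathrm{col}(B)$, with canonical witness $C=\pseudoinverse{B}A$) is sound, and your LDP case essentially coincides with the paper's. The genuine gap is in how you handle the attacker's \emph{exact} knowledge under PNDP and SecLDP. The attacker's true view contains noiseless rows --- its own gradients, i.e.\ rows of the form $K^G G + 0\cdot Z$ --- and with those rows present no factorization can exist at all, since $A=BC$ would force $K^G = 0\cdot C$; the paper makes exactly this point before its proof. You sidestep this by ``zeroing the corresponding columns of $A$'' and arguing that shrinking $\mathrm{col}(A)$ cannot break the containment. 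That argument only shows the \emph{reduced} matrix still factors; it does not show that the reduced mechanism $\tilde{A}G+BZ$ carries the same privacy content as the attacker's actual knowledge $AG+BZ$, which is what the theorem asserts (``the attacker knowledge can be expressed as\ldots''). The paper supplies precisely this missing step as \cref{thm: additional attacker-knowledge}: setting $\tilde{A}=A(I-\pseudoinverse{{K^G}}K^G)$ and using that the attacker knows $K^G G$, it shows the two views are post-processings of one another and hence have identical GDP guarantees. Without this lemma (or an equivalent conditioning argument), your PNDP case is unproven, and the analogous subtraction of known noise that underlies your SecLDP encoding (deleting columns of $B$) is likewise unjustified.

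The second gap is one you admit yourself: the SecLDP containment $\mathrm{col}(\projection L)\subseteq\mathrm{col}\bigl(\projection L(\pseudoinverse{C})_{:,\bar{\mathcal S}}\bigr)$ is left as an expectation, not a proof. The paper closes this case constructively for DECOR, and in a different way than you propose: rather than deleting the known-noise columns of $B$, it keeps the known noises as extra observation \emph{rows} whose corresponding rows of $A$ are zero, and exhibits a $C$ whose attacker-noise rows vanish, so the factorization survives by inspection. Finally, your structural premise that every linear DL algorithm's observables take the form $L(G+\pseudoinverse{C}Z)$ for a single map $L$ is false for Zip-DL, where the gradients are lifted by $\identity{nT}\kronecker 1_n$ while the noise enters through $\identity{T}\kronecker\pseudoinverse{C_{\text{Zip-DL}}}$; since the theorem quantifies over \emph{all existing} DL algorithms, a complete proof has to treat the algorithms case by case (DP-D-SGD, \Antipgd, Muffliato-SGD, DECOR, Zip-DL), which is what the paper's appendix does.
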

We refer the reader to~\cref{app:reduce attacker-knowledge} for the proof, where we construct the matrices $A$, $B$ and $C$ for each trust model based on the algorithms that introduced them.
These matrices are constructed in two steps:
\begin{enumerate*}[label=(\roman*)]
\item forming three distinct blocks corresponding to observed messages, known gradients, and known noise;
\item removing from the overall matrix $A$ the contribution of the known gradients.
\end{enumerate*}
The first step ensures that all known information is properly accounted for, while the second step guarantees the existence of a factorization $A = BC$, which is crucial to obtain GDP guarantees, as established in the next section by \cref{th:main}.

\section{More General Privacy Guarantees for Matrix Factorization}%
\label{sec:MFDP}

In \cref{sec:DL as MF}, we showed how to cast the attacker knowledge in DL as a MF mechanism $AG + BZ$ with $A = BC$. However, the resulting matrices do not satisfy the assumptions used in existing MF results, which require $A$ to be a square, full rank and lower-triangular matrix.
In this section, we extend the MF mechanism's differential privacy guarantees to workloads $A$ that may violate these assumptions. To do so, we first introduce a more precise definition of \emph{sensitivity}, which is fundamental for evaluating the quality of a factorization~\citep{pillutla2025correlatednoisemechanismsdifferentially}.

\begin{definition}[Sensitivity under participation, generalized]
Let $B$ and $C$ be two matrices. The sensitivity of $C$ with respect to $B$ and a participation scheme $\Pi$ is
\begin{equation}
\sens_\Pi(C ; B):=\max _{G \simeq_\Pi G'}\left\|C\left(G-G'\right)\right\|_{\pseudoinverse{B}B},
\label{eq:sensibility}
\end{equation}
with $\norm{A}_B^2 := \trace \left(  \transpose{A}BA \right)$.
\end{definition}

With this definition, we state our main privacy theorem, whose proof is provided in~\cref{sec:proof main theorem}.
\begin{theorem}%
\label{th:main}
    Let $\mathcal{O}_\attacker=AG+BZ$ be the attacker knowledge of a trust model on a linear DL algorithm, and denote $\mechanism(G)$ the corresponding mechanism. Let $\Pi$ be a participation scheme for $G$.
    For $Z\sim{\cal N}\left(0,\nu^2\right)^{\nbnoise\times\nbparams}$, when $A$ is a column-echelon matrix and there exists some matrix $C$ such that $A=BC$ with
    \begin{equation}\label{eq:generalized sensitivity}
        \nu=\sigma\sens_\Pi(C ; B)
        \quad\text{with }\quad 
        \sens_\Pi(C ; B) \leq \max_{\pi \in \Pi} \sum_{s,t\in\pi}\left|\left(\transpose{C}\pseudoinverse{B}BC\right)_{s,t}\right|
        ,
    \end{equation}
    then
    $\mechanism$ is $\frac{1}{\sigma}$-GDP, even
    when $G$ is chosen adaptively.
\end{theorem}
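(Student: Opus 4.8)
The plan is to reduce the claim to a standard Gaussian-mechanism computation by exploiting the factorization $A = BC$, and then to lift that bound to the adaptive regime. First I would rewrite the attacker knowledge as
\[ \mathcal{O}_\attacker = AG + BZ = BCG + BZ = B\,(CG + Z), \]
exhibiting $\mathcal{O}_\attacker$ as a left-multiplication by $B$ of the strategy release $CG + Z$. Since $B$ acts injectively on its row space and annihilates its complement, $\pseudoinverse{B}$ recovers $\pseudoinverse{B}B\,(CG+Z)$ from $\mathcal{O}_\attacker$, while $B$ maps it back (using $B\pseudoinverse{B}B = B$); the two releases are thus related by post-processing in both directions, so by the post-processing invariance of GDP it suffices to control the projected release $\pseudoinverse{B}B\,(CG+Z)$. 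This is also what makes the weighted norm $\|\cdot\|_{\pseudoinverse{B}B}$ the natural geometry for the sensitivity.

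Next I would treat the non-adaptive case. For fixed $G$, $\mathcal{O}_\attacker$ is Gaussian with mean $AG$ and per-column covariance $\nu^2 BB^\top$; for neighbors $G\simeq_\Pi G'$ the mean shift $A(G-G')=BC(G-G')$ lies in the range of $B$, hence of $BB^\top$, so the Gaussian location family is exactly $\mu$-GDP with $\mu = \nu^{-1}\sqrt{(C(G-G'))^\top\pseudoinverse{B}B\,C(G-G')}$, after using the SVD identity $\transpose{B}(BB^\top)^{\dagger} B = \pseudoinverse{B}B$. Maximizing over neighbors gives $\mu = \nu^{-1}\sens_\Pi(C;B)$, and substituting $\nu = \sigma\,\sens_\Pi(C;B)$ yields $\mu = 1/\sigma$. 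The sensitivity bound in \eqref{eq:generalized sensitivity} then drops out by expanding
\[ \|C(G-G')\|_{\pseudoinverse{B}B}^2 = \sum_{s,t}\bigl(\transpose{C}\pseudoinverse{B}BC\bigr)_{s,t}\,\langle D_s, D_t\rangle, \quad D = G-G', \]
observing that $D$ is supported on the rows indexed by a single participation pattern $\pi$ with unit-norm rows (by clipping), and bounding each $|\langle D_s,D_t\rangle|\le 1$ via Cauchy--Schwarz.

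The substantive step is discharging the adaptivity, i.e.\ allowing each $g_t$ to depend on previously released noisy quantities. Here I would adapt the conditioning argument of \citet{denisovImprovedDifferentialPrivacy2022} and \citet{pillutla2025correlatednoisemechanismsdifferentially}, previously restricted to square, lower-triangular, full-rank workloads. The point is that the column-echelon form of $A$ furnishes exactly the causal structure the argument needs: the observation coordinates admit an ordering in which each newly revealed coordinate carries a fresh, independent block of noise and depends only on gradients already committed. Conditioning on the past then makes each incremental release a Gaussian mechanism whose mean shift is controlled by the same per-pattern sensitivity, and telescoping these conditional guarantees shows the adaptive privacy loss matches the non-adaptive bound computed above.

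The hard part will be this last step under the weakened hypotheses. I must verify that column-echelon form (rather than square/triangular/full-rank) still admits a valid causal unrolling, and carry the conditioning argument out while $B$---and hence $BB^\top$---may be rank-deficient, so that one works in the degenerate $\pseudoinverse{B}B$ geometry rather than with a genuine inverse covariance. The delicate point is showing that the projected noise $\pseudoinverse{B}B\,Z$ splits into the independent causal increments the argument requires; this is exactly where the column-echelon structure of $A$, together with a compatible choice of the factor $C$, must be invoked.
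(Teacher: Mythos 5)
Your non-adaptive analysis is correct, and in fact more direct than the paper's: writing $\mathcal{O}_\attacker = B(CG+Z)$, invoking the degenerate Gaussian trade-off with the identity $\pseudoinverse{B} = \transpose{B}(B\transpose{B})^{\dagger}$ so that the privacy loss is governed by $\|C(G-G')\|_{\pseudoinverse{B}B}$, and expanding the quadratic form to get the per-pattern bound is a clean route to the fixed-$G$ guarantee. But the adaptive case --- which is the actual content of \cref{th:main} beyond prior work --- is exactly the step you leave open, and the plan you sketch for it does not go through as stated. Your claim that, thanks to the column-echelon form of $A$, ``each newly revealed coordinate carries a fresh, independent block of noise'' is false in the parametrization you work in: the noise in $\mathcal{O}_\attacker$ is $BZ$, whose per-column covariance $\nu^2 B\transpose{B}$ is neither diagonal nor (in general) full rank, so successive observation coordinates share noise and may even be deterministic functions of earlier ones. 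Passing to the projected release $\pseudoinverse{B}B(CG+Z)$ does not repair this --- the projected noise $\pseudoinverse{B}BZ$ is still correlated across coordinates --- which is precisely the obstruction you flag in your last paragraph without resolving it.

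The missing idea, which is how the paper closes this gap, is a change of coordinates that manufactures the i.i.d.\ causal noise structure rather than asserting it. First, one selects a maximal set of linearly independent rows of $B$, in their original order, via a row-selection matrix $P$; since the discarded rows are linear functions of the kept ones, $\mechanism$ and $\mechanism_1 : G \mapsto PB(CG+Z)$ are post-processings of each other, and $B' := PB$ is full rank with $B'C$ still echelon. Second, one takes an LQ decomposition $B' = \left[L \; 0\right]Q$ with $L$ invertible lower triangular and $Q$ orthonormal: rotational invariance gives $QZ \overset{d}{=} Z$, and inverting $L$ (post-processing by an invertible map) reduces everything to $\mechanism_3 : G \mapsto \tilde{Q}CG + \tilde{Z}$, where $\tilde{Z}$ now \emph{is} i.i.d.\ Gaussian and $\tilde{Q}C = L^{-1}PA$ is lower echelon. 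Only in these coordinates does the standard adaptive continual-release argument of \citet{denisovImprovedDifferentialPrivacy2022} apply, and the sensitivity transported back is the right one because $\transpose{\tilde{Q}}\tilde{Q} = \pseudoinverse{B}B$, so $\sens_\Pi(\tilde{Q}C) = \sens_\Pi(C;B)$. Without this reduction (or an equivalent explicit construction of independent causal increments), your proof establishes the theorem only for non-adaptively chosen $G$.
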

\Cref{th:main} shows that the privacy guarantees of a matrix factorization mechanism remain valid for a broader class of matrices $A$, even when the gradient vector is chosen \emph{adaptively}, i.e., when it may depend arbitrarily on all \emph{past} information \citep{denisovImprovedDifferentialPrivacy2022}.
The column-echelon form property of the matrix ensures it cannot depend on \emph{future} information. 
This generalization of the usual lower-triangular property is necessary when considering attackers who have only partial information over the network. Most sets of observations that follow the natural causal ordering in DL can be rewritten as a column-echelon matrix.

Our result generalizes existing results by allowing $A$ to be rectangular and possibly rank-deficient, a relaxation crucial for applying it to DL algorithms. The price of this extension is a dependency on $B$:
since $B$ may not be invertible, $C$ is not unique, and the sensitivity now depends on both the encoder matrix $C$ and the decoder matrix $B$. The correction $\pseudoinverse{B}B$ can be interpreted as a projection of the columns of $C$ onto the row space of $B$, discarding unobserved gradient combinations.
When $B$ is square and full rank, as in prior work, we recover the known formula $\sens_\Pi(C ; B) = \sens_\Pi(C ; I) = \sens_\Pi(C)$.

\begin{remark}
    In DL, the participation scheme $\Pi$ is also slightly modified. A participation pattern $\pi$ is associated to a local dataset ${\cal D}_u$ and can be non-negative only for the rows in $\{u, n+u, \dots, (T-1)n + u\}$, which corresponds to gradients computed from ${\cal D}_u$.
\end{remark}

\begin{remark}\label{rem:deriving other privacy guarantees}
Following Corollary 1 of~\cite{Dong22GDP}, $\mu$-GDP directly and tightly translates into $(\epsilon,\delta)$-DP, with appropriate $\epsilon$ and $\delta$ values. 
Moreover, it is also possible to derive $(\alpha, \epsilon)$-Rényi DP guarantees~\cite{gilRenyiDivergenceMeasures2013}.
\end{remark}
\section{A Novel Algorithm with Optimized Noise Correlation}
\label{sec:mafalda}

In this section, we showcase how the formalization developed in the previous sections enables optimizing noise correlation in DL algorithms. Specifically, we introduce \sys (MAtrix FActorization for Local Differentially privAte SGD), a new algorithm for DL with LDP, which can be seen as a special instance of MF-D-SGD. 

We first define an objective function $\mathcal{L}_{\text{opti}}$ capturing the optimization error of MF-D-SGD.
\begin{definition}\label{def:optimization loss}
    Consider~\cref{eq:MF-D-SGD} with encoder matrix $C$, such that $\workloadgradient{T} = BC$.
    The optimal correlation can be found by minimizing the following objective function:
    \begin{align}\label{eq:optimization loss}
        \mathcal{L}_{\text{opti}}\left(\workloadgradient{T},B, C\right)
        := &
        \sens_{\Pi}\left(C; B\right)^2
        \norm{\left(\identity{T}\kronecker W\right)\workloadgradient{T} 
            \pseudoinverse{C}
            }_F^2.
    \end{align}
\end{definition} 
In DL, the optimization loss is defined with respect to the averaged model, as explicitly written in \cref{eq:MF-D-SGD} for MF-D-SGD. However, sensitivity must also account for the attacker knowledge (\cref{th:main}), which in this case corresponds to $\mathcal{O}_{\mathcal{A}}$ (defined in \cref{eq:MF-D-SGD sensitive information}). The difference between the two terms explains why, despite having a workload matrix equal to $\workloadgradient{T}$, the second term involves $\left(\identity{T}\kronecker W\right)\workloadgradient{T}$ instead.

While one could attempt to minimize \cref{eq:optimization loss} directly, doing so is impractical without additional constraints. The encoder matrix $C$ has size $nT \times nT$, which becomes intractable for large graphs or long time horizons. Furthermore, the resulting correlated noises could necessitate correlations across nodes, which in turn would require trust between them. These challenges motivate the introduction of structural constraints on $C$.

Here, we focus on Local DP guarantees, where nodes cannot share noise. Accordingly, we consider only \emph{local correlations}, constraining $C = C_{\text{local}} \kronecker \identity{n}$ so that the block structure enforces locality and all nodes follow the same pattern, reducing the cost of computing $C$. For simplicity, we also assume that all nodes adhere to the same local participation pattern. Correlation schemes proposed for the centralized setting naturally fit this form; for example, AntiPGD~\citep{koloskovaGradientDescentLinearly2023} subtracts the noise added at a given step in the next step. However, we find that this correlation pattern performs poorly when used in DL (see \Cref{sec:experiments}), motivating our approach of optimizing correlations specifically for decentralized algorithms.

Under LDP and local noise correlation, sensitivity depends only on $C_{\text{local}}$. We simplify the general objective in \Cref{def:optimization loss} with the following lemma, with its proof deferred to~\cref{sec:missing proofs}.
\begin{lemma}\label{lem:equivalent optimization problem mafalda}
    Consider the LDP trust model ($\Tilde{A}= \workloadgradient{T}$, $\tilde{B}=B$) and local noise correlation $C = C_{\text{local}} \kronecker \identity{n}$. Define:
    \begin{equation}\label{eq: gram workload def}
        A_i := \left[\left(\identity{T}\kronecker W\right)\workloadgradient{T}{\permutation^{(T,n)}}\right]_{[:, iT:(i+1)T-1]},
        \quad
        H :=\sum_{i=1}^{n} \transpose{A_i} A_i,
    \end{equation}
    where $\permutation^{(T,n)}$ is a commutation matrix that commutes space and time in our representation~\citep{loanUbiquitousKroneckerProduct2000}.
    Then, the objective function of~\cref{eq:optimization loss} is equivalent to:
    \begin{equation}\label{eq:optim loss MAFALDA}
        \mathcal{L}_{\text{opti}}\left(\workloadgradient{T}, B,  C_{\text{local}}\right) 
        =
        \mathcal{L}_{\text{opti}}\left(\workloadgradient{T},  C_{\text{local}}\right) 
        = 
        \sens_{\Pi_{\text{local}}}(C_{\text{local}})^2 
        \norm{L \pseudoinverse{C_{\text{local}}}}_F^2,
    \end{equation}
    where $L$ is lower triangular and obtained by Cholesky decomposition of $H=\transpose{L}L$.
\end{lemma}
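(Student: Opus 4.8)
The plan is to treat the two factors of the objective $\mathcal{L}_{\text{opti}}(\workloadgradient{T},B,C)=\sens_{\Pi}(C;B)^2\,\norm{(\identity{T}\kronecker W)\workloadgradient{T}\pseudoinverse{C}}_F^2$ separately, showing that the sensitivity factor collapses to $\sens_{\Pi_{\text{local}}}(C_{\text{local}})^2$ and the utility factor to $\norm{L\pseudoinverse{C_{\text{local}}}}_F^2$. Throughout I use that local correlation means $C=C_{\text{local}}\kronecker\identity{n}$, so that $\pseudoinverse{C}=\pseudoinverse{C_{\text{local}}}\kronecker\identity{n}$ by the Kronecker rule for pseudo-inverses, and that we write $P:=(\identity{T}\kronecker W)\workloadgradient{T}$.

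For the sensitivity factor I would first argue that under LDP the correction $\pseudoinverse{B}B$ is the identity: since $\workloadgradient{T}$ is block lower-triangular with $\identity{n}$ on its diagonal it is invertible, and assuming $C_{\text{local}}$ (hence $C$) is full rank, $B=\workloadgradient{T}\pseudoinverse{C}$ is invertible as well, so $\pseudoinverse{B}B=\identity{nT}$ and $\sens_\Pi(C;B)=\sens_\Pi(C)$ reduces to the ordinary Frobenius sensitivity of \cref{eq:sensibility_centralized}. Then I invoke the Remark: a neighboring pair $G\simeq_\Pi G'$ differs only in the rows $\{u,n+u,\dots,(T-1)n+u\}$ of a single node $u$. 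Because $C=C_{\text{local}}\kronecker\identity{n}$ acts as $(C_{\text{local}})_{s,t}\delta_{v,w}$ on index pairs $(s,v),(t,w)$, the product $C(G-G')$ stays supported on node $u$'s rows and equals $C_{\text{local}}$ applied to the $T\times d$ sub-block $(G-G')_u$. Hence $\norm{C(G-G')}_F=\norm{C_{\text{local}}(G-G')_u}_F$, and since every node uses the same local pattern $\Pi_{\text{local}}$, maximizing over $u$ and over the admissible difference gives exactly $\sens_{\Pi_{\text{local}}}(C_{\text{local}})$.

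For the utility factor, which I expect to be the crux, I would rewrite the squared Frobenius norm as a trace and insert $\permutation^{(T,n)}\transpose{\permutation^{(T,n)}}=\identity{nT}$ so that the columns of $P$ regroup by node into the blocks $A_i$ of \cref{eq: gram workload def}. Concretely, $\norm{P\pseudoinverse{C}}_F^2=\trace\!\big(P\,\pseudoinverse{C}\transpose{\pseudoinverse{C}}\,\transpose{P}\big)$, and using $\pseudoinverse{C}\transpose{\pseudoinverse{C}}=(\pseudoinverse{C_{\text{local}}}\transpose{\pseudoinverse{C_{\text{local}}}})\kronecker\identity{n}$ together with the commutation identity $\transpose{\permutation^{(T,n)}}\big(X\kronecker\identity{n}\big)\permutation^{(T,n)}=\identity{n}\kronecker X$, the middle factor becomes block diagonal with $n$ copies of $\pseudoinverse{C_{\text{local}}}\transpose{\pseudoinverse{C_{\text{local}}}}$ in the node-major basis where $P\permutation^{(T,n)}=[A_1\mid\dots\mid A_n]$. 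Expanding the block-diagonal product and using cyclicity of the trace then yields $\sum_{i=1}^n\trace\!\big(\transpose{A_i}A_i\,\pseudoinverse{C_{\text{local}}}\transpose{\pseudoinverse{C_{\text{local}}}}\big)=\trace\!\big(H\,\pseudoinverse{C_{\text{local}}}\transpose{\pseudoinverse{C_{\text{local}}}}\big)$ with $H=\sum_i\transpose{A_i}A_i$. Finally, substituting the factorization $H=\transpose{L}L$ with $L$ lower triangular and applying cyclicity once more gives $\trace\!\big(\transpose{L}L\,\pseudoinverse{C_{\text{local}}}\transpose{\pseudoinverse{C_{\text{local}}}}\big)=\norm{L\pseudoinverse{C_{\text{local}}}}_F^2$, completing the reduction.

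The main obstacle is bookkeeping the Kronecker reorderings: I must ensure the commutation matrix $\permutation^{(T,n)}$ is applied on the correct side and in the direction that turns $C_{\text{local}}\kronecker\identity{n}$ (correlation across time, independent across nodes) into the block-diagonal $\identity{n}\kronecker C_{\text{local}}$ whose $i$-th block pairs with the column group $A_i$, and to fix the Cholesky convention $H=\transpose{L}L$ with $L$ lower triangular, which exists because $H$ is a sum of Gram matrices and hence PSD, so that the lower-triangular $L$ lands on the left inside the Frobenius norm. Once these conventions are pinned down, both reductions follow directly from the Kronecker algebra and the cyclicity of the trace.
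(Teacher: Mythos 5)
Your proof is correct and follows essentially the same route as the paper's: reduce the sensitivity factor to $\sens_{\Pi_{\text{local}}}(C_{\text{local}})$, then use the Kronecker pseudo-inverse rule and the commutation matrix to regroup the columns of $\left(\identity{T}\kronecker W\right)\workloadgradient{T}$ node-by-node into the blocks $A_i$, sum the Gram blocks into $H$, and substitute the Cholesky factor $H=\transpose{L}L$. The only differences are cosmetic: you carry out the trace bookkeeping on the right Gram matrix $P\pseudoinverse{C}\transpose{\pseudoinverse{C}}\transpose{P}$ rather than the paper's left Gram, and you spell out (via invertibility of $B=\workloadgradient{T}\pseudoinverse{C}$ and the node-support argument) the sensitivity identities $\pseudoinverse{B}B=\identity{nT}$ and $\sens_\Pi(C;B)=\sens_{\Pi_{\text{local}}}(C_{\text{local}})$ that the paper merely asserts.
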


This yields a standard factorization problem for $L$, which can be solved using existing frameworks for the MF mechanism~\cite{choquette-chooMultiEpochMatrixFactorization2023,pillutla2025correlatednoisemechanismsdifferentially}. In practice, these frameworks operate on the Gram matrix $H = \transpose{L}L$, avoiding the need for Cholesky decomposition.

\begin{algorithm}[t]
\caption{\sys: MAtrix FActorization for Local Differentially privAte SGD}\label{alg:mafalda sgd}
\begin{algorithmic}[1]
\Require $W\in\RR^{n\times n}$, $T$, $\Delta_g$, $\sigma$, $\model{0}\in\RR^{n\times \nbparams}$
\State Build workload matrix $\workloadgradient{T}$\Comment{\cref{fig:recap}}
\State $H \leftarrow \sum_{i=1}^{n} \transpose{A_i} A_i$ where $A_i := \left[\left(\identity{T}\kronecker W\right)\workloadgradient{T}{\permutation^{(T,n)}}\right]_{[:, iT:(i+1)T-1]}$\Comment{\cref{lem:equivalent optimization problem mafalda}}
\State $L\leftarrow$ Cholesky decomposition of $H$\Comment{$H= \transpose{L}L$}
\State $C_{\text{mafalda}} \leftarrow \min _{C_{\text{\text{local}}}}\mathcal{L}_{\text{opti}}(L, C_{\text{\text{local}}})$ \Comment{Using L-BFGS~\cite{choquette-chooMultiEpochMatrixFactorization2023}}
\State \Return MF-D-SGD($W,C_{\text{mafalda}}, T, \Delta_g, \sigma, \model{0}$) \Comment{\cref{alg:noisy DL}}
\end{algorithmic}
\end{algorithm}

\Cref{alg:mafalda sgd} summarizes our novel \textsc{MAFALDA-SGD} algorithm. It first computes the optimal correlation by optimizing for the privacy-utility trade-off in DL based on the objective function in \Cref{lem:equivalent optimization problem mafalda}, before running MF-D-SGD with the resulting $C_{\text{mafalda}}$. Note that $C_{\text{mafalda}}$ can be computed offline by any party that knows the gossip matrix and the participation pattern.

\begin{remark}
        The complexity of \cref{alg:mafalda sgd} is driven by (i) the computation of the matrix $L$, which costs $\mathcal{O}(n^3T + n^2T^3)$ in time, and $\mathcal{O}(n^2T^2)$ in space, and (ii) the minimization of $\mathcal{L}_{\text{opti}}$ which is  $\mathcal{O}(T^3)$ per step of L-BFGS.
        Correlation restarts~\citep{pillutla2025correlatednoisemechanismsdifferentially} can be used to keep $T$ within reasonable bounds.
\end{remark} 
\section{Experiments}\label{sec:experiments}

In this section, we highlight the advantages of our approach by first showing that we can achieve significantly tighter privacy accounting for an existing algorithm under PNDP (\Cref{sec:tightexpe}), and then demonstrating that \textsc{Mafalda}-SGD outperforms existing DL algorithms under LDP.
We use both synthetic and real-world graphs. Additional experiments are reported in \Cref{sec:additional experiments}.  

\subsection{Tighter accounting for PNDP}
\label{sec:tightexpe}

\begin{figure}[bth]
    \centering
    \includegraphics[width=.75\linewidth]{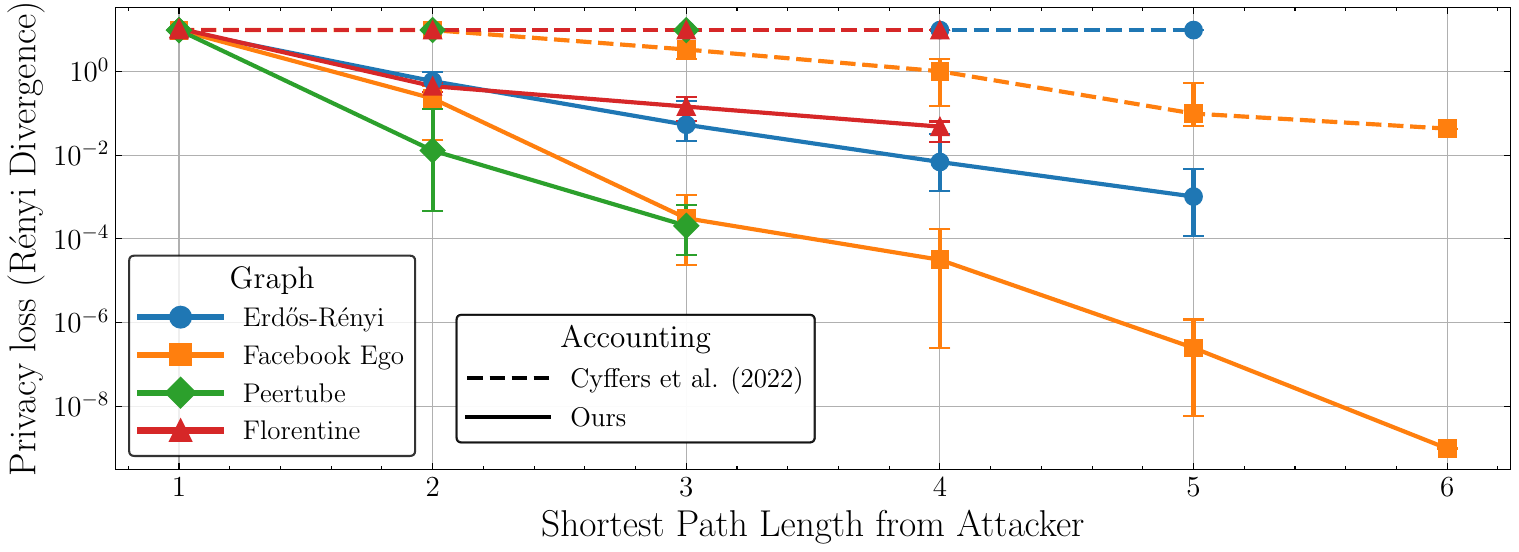}
    \caption{PNDP loss accounting for DP-D-SGD: comparison between the original accounting of \citet{cyffersMuffliatoPeertoPeerPrivacy2022} (dashed lines) and our accounting (solid lines) under the PNDP trust model. We report the Rényi divergence (fixing $\alpha = 2$ w.l.o.g., since the results are proportional to $\alpha$) for the minimum, maximum, and average in error bars over nodes at a given distance.}
    \label{fig:accounting}
\end{figure}

In this experiment, we consider DP-D-SGD---which corresponds to the  Muffliato-SGD algorithm from \citet{cyffersMuffliatoPeertoPeerPrivacy2022} when the number of gossip steps is set to $K=1$---and compute its PNDP guarantee with two different privacy accounting methods: the one proposed by \citet{cyffersMuffliatoPeertoPeerPrivacy2022} and our own, obtained by casting the guarantees of \cref{th:main} to the case of DP-D-SGD with a PNDP attacker as follows.  
For $C_a = \identity{nT}$ and $B_a = \projection_a \workloadgradient{T}$, using \Cref{th:main}, the bound on sensitivity of DP-D-SGD under PNDP is
\begin{align}\label{eq:dp-p-sgd + pndp sensitivity}
    \sens_{\Pi}(C_a; B_a) \leq \max_{\pi\in\Pi} \sum_{s,t\in\pi} \left|\left(\pseudoinverse{B_a}B_a\right)_{s,t}\right| = \max_{\pi\in\Pi} \sum_{s,t\in\pi} \left|\left(\pseudoinverse{\left(\mathbf{P}_a \workloadgradient{T}\right)}\mathbf{P}_a \workloadgradient{T}\right)_{s,t}\right|.
\end{align}
This sensitivity allows us to compute a $\mu$-GDP bound, which we then convert into RDP. Since the initial accounting in \citet{cyffersMuffliatoPeertoPeerPrivacy2022} was performed at the user level rather than at the finer-grained record level, we run our accounting under the worst-case assumption that the attacked node always exposes the same single record (this corresponds to $(T,1)$-participation for all nodes). This ensures that our guarantees are computed at the same user level, leading to a fair comparison.

Note that the accounting does not depend on the choice of learning model or dataset. Rather, it is determined by the participation scheme, the number of epochs, the clipping parameter, and the noise scale, but also on the communication graph through the gossip matrix, as PNDP provides privacy guarantees that are specific to each pair of nodes. We thus consider several graphs that correspond to plausible real-world scenarios:
\begin{itemize}[leftmargin=1em]
    \item \emph{Erdős–Rényi graphs}: generated randomly with $100$ nodes and parameter $p=0.2$, ensuring that each generated graph is strongly connected. These graphs have expander properties and can be generated in a distributed fashion.
    \item \emph{Facebook Ego graph \citep{ego}}: corresponds to the graph of a fixed Facebook user (omitted from the graph), where nodes are the user’s friends and edges represent friendships between them. This graph has $148$ nodes.
    \item \emph{PeerTube graph \citep{damie2025fedivertexgraphdatasetbased}}: an example of decentralized social platform. PeerTube is an open-source decentralized alternative to YouTube. Each node represents a PeerTube server and edges encode follow relationships between servers, allowing users to watch recommended videos across instances. The graph, restricted to its largest connected component, has $271$ nodes.
    \item \emph{Florentine Families \citep{florentine}}: a historical graph with $15$ nodes describing marital relations between families in 15th-century Florence.
\end{itemize}

Results are shown in \Cref{fig:accounting}.
For all graphs, our accounting is significantly tighter for all possible distances between nodes. In particular, while the PNDP accounting of \citet{cyffersMuffliatoPeertoPeerPrivacy2022} provides no improvement over LDP guarantees for nodes at distances less than or equal to $2$, our method already achieves significant gains—up to an order of magnitude. The improvement is even larger, with reductions of at least two orders of magnitude at distances greater than or equal to $3$. The gains are consistent across all the considered graphs, demonstrating the versatility of our method, which leverages both the topology and the correlations induced between nodes.

\subsection{\textsc{Mafalda}-SGD}
\label{sec:mafaldaexpe}

We now illustrate the performance of \textsc{Mafalda}-SGD under LDP and compare it with three baselines: non-private D-SGD, AntiPGD as defined in \Cref{app:reduce attacker-knowledge}, and standard DP-D-SGD without noise correlation between nodes. Results on the Facebook Ego graph are shown in \Cref{fig:mafalda}, with additional results for other privacy levels and graphs reported in \Cref{sec:additional experiments}. We use our accounting for all the algorithms to provide a fair comparison and because it is the tightest accounting available. 

We study a regression task on the Housing dataset,\footnote{\url{https://www.openml.org/d/823}} which has $8$ features and $20{,}640$ data points, with the goal of predicting house prices based on demographic and housing features. In all experiments, we use a simple multilayer perceptron (MLP) with one hidden layer of width $64$ and ReLU activation, followed by a linear output layer for regression, trained with the mean-square error loss. This dataset has been used in several related works \citep{cyffersMuffliatoPeertoPeerPrivacy2022, Li2025Mitigating}.  
In all experiments, we cycle over the data points such that each of them participates $20$ times in total, with participation every $19$ steps. These numbers were chosen for memory constraints and not optimized. This cyclic participation corresponds to a $(k, b)$-participation scheme with $k=20$ and $b = 19$ in the matrix factorization literature \citep{choquette-chooMultiEpochMatrixFactorization2023}.

Our algorithm \sys outperforms all private baselines across all privacy regimes by a large margin. For a fixed privacy budget $\epsilon$, \sys achieves a $31\%$ average improvement in test loss in the last $50$ training steps, while for a fixed test-loss value of $0.75$, it achieves a $2$-fold reduction in $\epsilon$. These results illustrate the importance of correlated noise for the performance of differentially private decentralized algorithms. Additional experiments confirm that this superiority holds regardless of the graph topology. In particular, under small privacy budgets, there exist regimes where \textsc{Mafalda}-SGD converges while competitors diverge (e.g., AntiPGD in \Cref{fig:mafalda}).

\begin{figure}
    \centering
    \begin{minipage}{0.49\linewidth}
        \centering
        \includegraphics[width=0.9\textwidth]{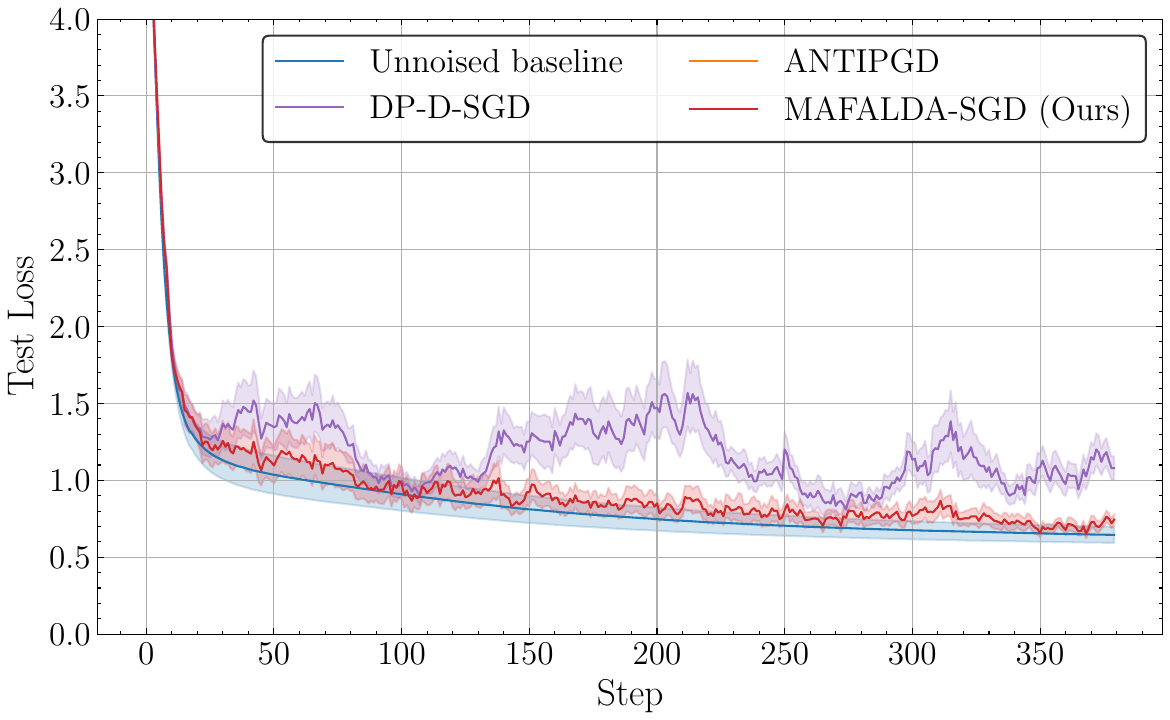}
    \end{minipage}
    \hfill
    \begin{minipage}{0.49\linewidth}
        \centering
        \includegraphics[width=0.9\textwidth]{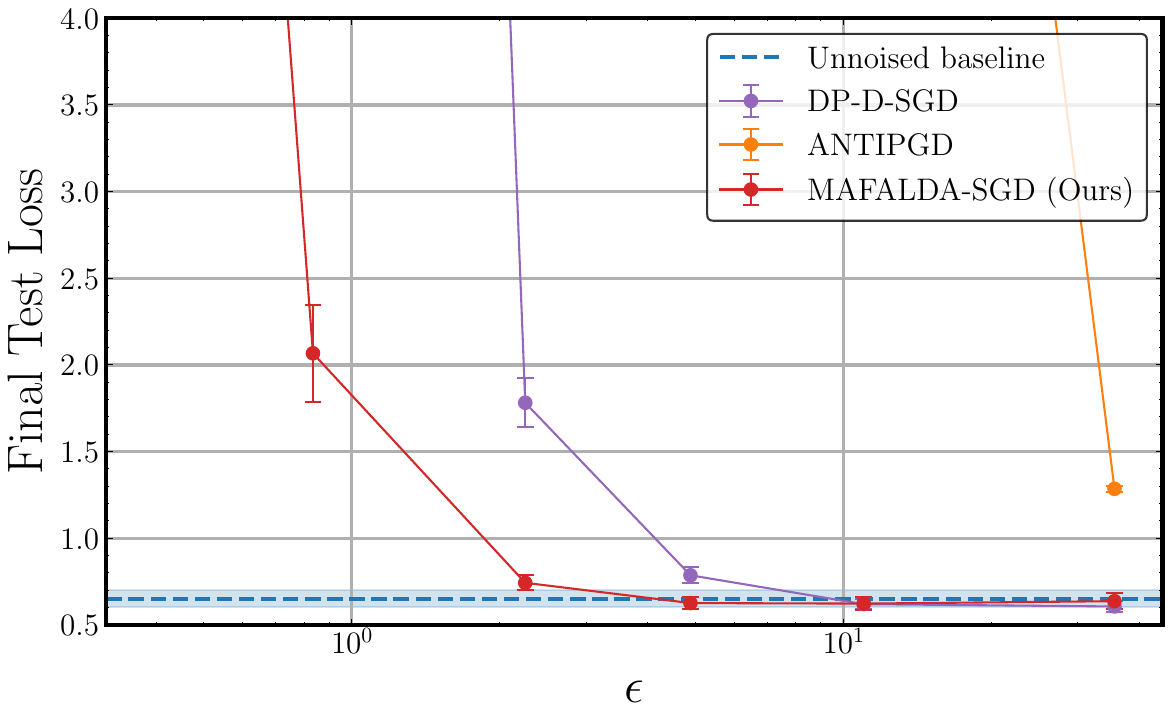}
    \end{minipage}
\caption{Comparison of \textsc{Mafalda}-SGD with the non-private baseline, AntiPGD, and standard DP-SGD under local differential privacy on the Housing dataset with the Facebook Ego graph. Left: test loss over time (AntiPGD does not appear because the test loss is always greater than $6$), averaged over $20$ runs. Right: final test loss as a function of the LDP privacy budget, averaged over $20$ runs. In both cases, data points are distributed uniformly at random across nodes.}
    \label{fig:mafalda}
\end{figure}

    We also evaluate \textsc{Mafalda}-SGD on the federated EMNIST (FEMNIST) image classification task from the LEAF benchmark~\citep{caldas2018leaf}, predicting handwritten characters across users.
    This setting is highly non-iid and realistic for decentralized learning.
    To ensure enough data is present, each node is given $10$ unique handwriting styles as training data.
    We use a small convolutional network comprising two convolutional layers (16 and 32 filters, $3\times3$ kernels) with ReLU and $2\times2$ max-pooling, followed by group normalization and a fully-connected layer for a softmax output trained with cross-entropy. Training uses a cyclic participation scheme ($(k,b)=(20,16)$).
\begin{figure}
    \begin{minipage}{0.49\linewidth}
        \includegraphics[width=0.8\textwidth]{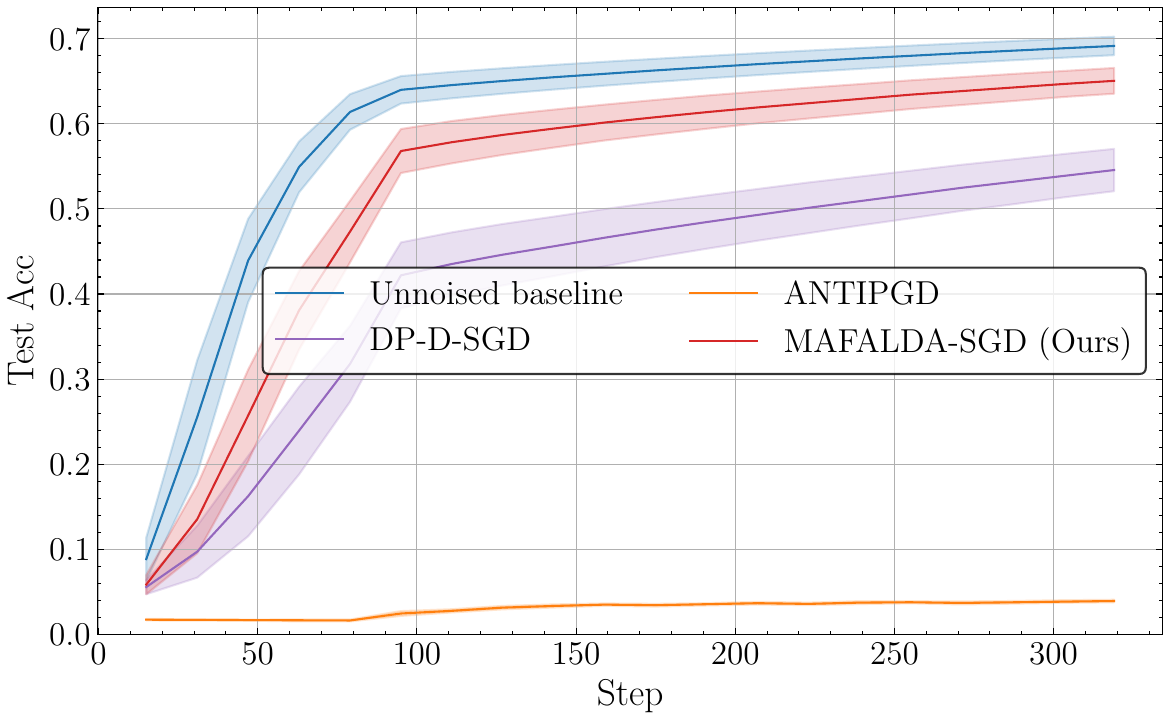}
    \end{minipage}
    \hfill
    \begin{minipage}{0.49\linewidth}
    \centering
    \includegraphics[width=0.8\textwidth]{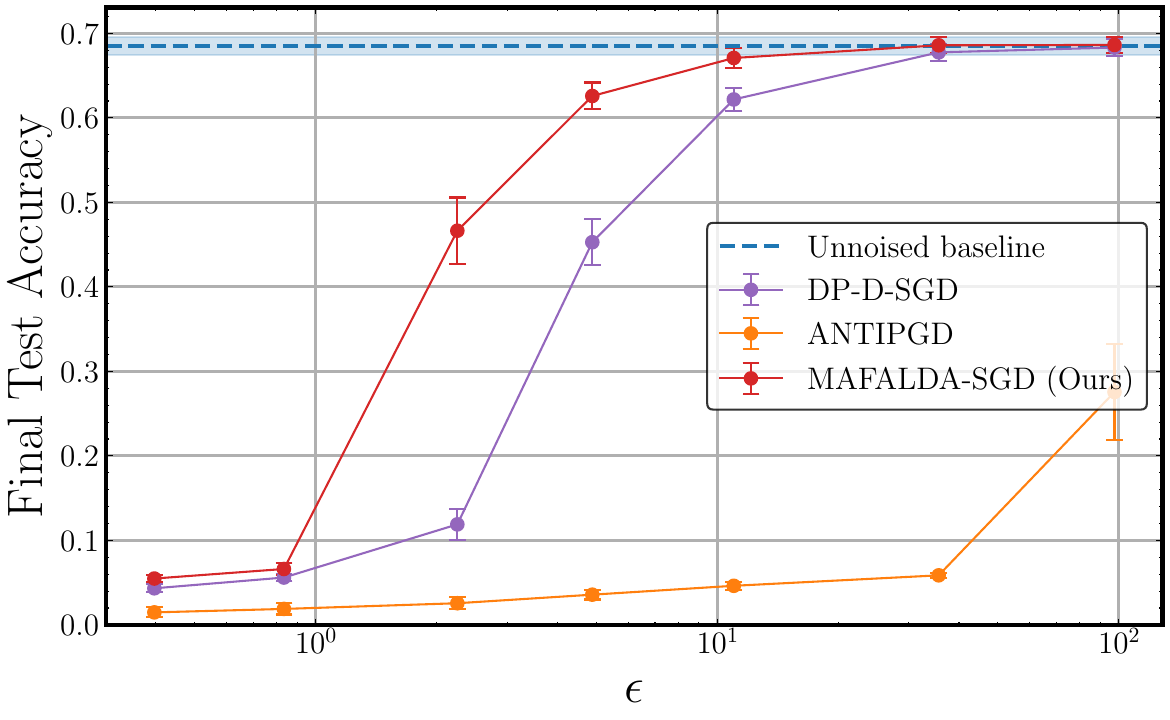}
    \end{minipage}
    \caption{FEMNIST classification results on the Facebook Ego graph. Left: test accuracy evolution for a fixed privacy budget. Right: privacy-utility tradeoff between the final test accuracy as a function of the LDP privacy budget. Error bands show 95\% confidence intervals.}
    \label{fig:femnist}
\end{figure}

    We compare \textsc{Mafalda}-SGD to baselines under local DP accounting. Results (averaged over 20 runs) are reported in \Cref{fig:femnist}. Across all privacy budgets, \textsc{Mafalda}-SGD consistently achieves higher test accuracy than the private baselines, with the largest relative gains at high privacy levels where correlated noise preserves utility while satisfying LDP.
\section{Conclusion}\label{sec:conclusion}

In this work, we establish a new connection between two separate research directions by extending the matrix factorization mechanism from the well-studied centralized DP-SGD to differentially private decentralized learning. This required generalizing known results in matrix factorization to broader classes of matrices, which may also prove useful in other contexts. Our framework is flexible enough to capture both algorithms and trust models, while providing tighter privacy guarantees than prior analyses. It also enables the design of new algorithms, as illustrated by \textsc{Mafalda-SGD}, which outperforms existing approaches. Overall, our framework lays the foundation for a more principled design of private decentralized algorithms and enhances the practicality of privacy-preserving machine learning in decentralized settings.

\subsubsection*{Acknowledgments}
This work was supported by grant ANR-20-CE23-0015 (Project PRIDE) and the ANR 22-PECY-0002 IPOP (Interdisciplinary Project on Privacy) project of the Cybersecurity PEPR and by the Austrian Science Fund (FWF) [10.55776/COE12].
This research was supported in part by the Groupe La Poste, sponsor of the Inria Foundation, in the framework of the FedMalin Inria Challenge.
This project was provided with computing AI and storage resources by GENCI at IDRIS thanks to the grant 2025-AD011015352R1 on the supercomputer Jean Zay's V100 partition.
Experiments presented in this paper were also carried out using the Grid'5000 testbed, supported by a scientific interest group hosted by Inria and including CNRS, RENATER and several Universities as well as other organizations (see \url{https://www.grid5000.fr}).
The authors thank Quino for his comic strip Mafalda which inspired our new algorithm name.

\subsubsection*{Reproducibility} We provide an open-source implementation of the code used in this work, publicly available at~\url{https://github.com/dimiarbre/MFDL}. %

\bibliography{references}
\bibliographystyle{iclr2026_conference}

\appendix
\begin{table*}[ht]
  \centering
  \begin{tabular}{|c|l|c|}
    \hline
    Symbol & Usage & Source
  \\\hline $\nodeset$ & Set of nodes participating in the training &
  \\\hline $\nbnodes$ & Number of nodes in the training &
  \\\hline $\nbparams$ & Model size &
  \\\hline $\lr$ & Learning rate &
  \\\hline $T$ & time $t$ ranges from 1 to $T$ &
  \\\hline $\nbobs$ & Number of observations made by the attacker  &
  \\\hline $\model{}$ & A model during the training &
  \\\hline $\gossmatrix{}$ & Gossip matrix&
  \\\hline $\workloadgradient{T}$ & Decentralized learning workload & \cref{fig:recap}
  \\\hline $\transpose{\encodermatrix}$ & Transpose of matrix $\encodermatrix$. & 
  \\\hline $\pseudoinverse{\encodermatrix}$ & Moore-Penrose pseudo-inverse of matrix $\encodermatrix{}$  &
  \\\hline $\kronecker$ & Kronecker product & 
  \\\hline $\Pi$ & Participation scheme & 
    \\\hline $\mathcal{D}_u$ & Dataset of some node $u$ & 
  \\\hline
  \end{tabular}
  \caption{List of symbols used in this work}%
  \label{tab:symbol table}
\end{table*} 
\section{SotA DP-DL Approaches as Matrix Factorization Mechanisms}
\label{app:reduce attacker-knowledge}

In this section, we prove \cref{thm:MF from reduced attacker knowledge} by looking at algorithms one by one. But first, we explain the strategy used to get the factorization, and we present a useful Lemma.

\subsection{Path to Matrix Factorization Mechanisms}

For each setting, the proof of \cref{thm:MF from reduced attacker knowledge} follows a two-step strategy:
\begin{enumerate}
    \item list observed messages, known gradients, and known noise, and derive the corresponding matrices $A$ and $B$;
    \item (if necessary) remove the impact of the known gradients from matrix $A$ (apply \cref{thm: additional attacker-knowledge}).
\end{enumerate}

Apart from LDP, the second step is mandatory to recover a proper factorization. Indeed, if one considers a Linear DL algorithm and an attacker knowledge $\mathcal{O}_\attacker=AG+BZ$ such that
$A=\begin{bmatrix}\bar{A} \\ K^G\end{bmatrix}$ and $B=\begin{bmatrix}\bar{B} \\ 0\end{bmatrix}$.
Then, no matrix $C$ is such that $A = BC$, as it would imply that $K^G=0C$.

However, by following the cleaning strategy given by \cref{thm: additional attacker-knowledge}, we obtain a second reduced view, $\tilde{\mathcal{O}}_\attacker := \tilde{A}G+BZ$, such that $\tilde{A} = BC$ can exist.

\begin{lemma}[Attacker-knowledge reduction]\label{thm: additional attacker-knowledge}
Let $\mathcal{O}_\attacker=AG+BZ$ be the attacker knowledge of a trust model on a Linear DL algorithm, such that the attacker is aware of some linear combinations of gradients (without noise).
Without loss of generality, let assume $A=\transpose{[\transpose{\bar{A}}\;\transpose{{K^G}}]}$ and $B=\transpose{[\transpose{\bar{B}}\;0]}$ where none of $\bar{B}$'s rows are null.
If we define $\tilde{A}:=A(I-\pseudoinverse{{K^G}}K^G)$,
then the GDP guarantees of the attacker knowledge $\mathcal{O}_\attacker$ are the same as the one of the \emph{attacker-knowledge reduction} $\tilde{\mathcal{O}}_\attacker := \tilde{A}G+BZ$.
\end{lemma}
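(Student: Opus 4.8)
The plan is to show that $\mathcal{O}_\attacker$ and its reduction $\tilde{\mathcal{O}}_\attacker$ induce, for every neighboring pair, the \emph{same} hypothesis-testing problem, so that their trade-off functions and hence their $\tfrac{1}{\sigma}$-GDP guarantees coincide. First I would set $P := \pseudoinverse{{K^G}}K^G$, the orthogonal projector onto the row space of $K^G$, and record the identity $K^G(I - P) = K^G - K^G\pseudoinverse{{K^G}}K^G = 0$. Writing $A = \transpose{[\transpose{\bar A}\;\transpose{{K^G}}]}$, this gives $\tilde A = A(I-P) = \transpose{[\transpose{(\bar A(I-P))}\;0]}$, so that $\tilde{\mathcal{O}}_\attacker = \tilde A G + BZ$ consists of the single informative block $\bar A(I-P)G + \bar B Z$ together with an identically-zero block (the rows where $B$ already vanishes). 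In parallel, the original view splits as $\mathcal{O}_\attacker = (\bar A G + \bar B Z,\; K^G G)$, the first block carrying all the noise and the second being a noiseless readout of the known gradient combinations. The hypothesis that $\bar B$ has no null row is exactly what guarantees this split is the correct one, i.e. that $K^G$ collects \emph{all} the noiseless coordinates.

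The crux of the argument is the behavior of the noiseless block $K^G G$ across neighboring datasets. Because $K^G G$ encodes gradient combinations that are \emph{known} to the attacker, \eg those of colluding nodes, they do not involve the protected record: under the DL participation scheme, where $G \simeq_\Pi G'$ forces $G - G'$ to be supported only on the rows $\{u, n+u, \dots, (T-1)n+u\}$ attached to the attacked node $u$ (cf.\ the remark following \cref{th:main}), the matrix $K^G$ has no support on these rows. Hence for every $\mathcal{D}\simeq_u\mathcal{D}'$ one has $K^G(G - G') = 0$, equivalently $G-G' \in \ker K^G = \mathrm{row}(K^G)^{\perp}$, and therefore $P(G-G') = \pseudoinverse{{K^G}}K^G(G-G') = 0$. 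The immediate consequence is the equality of mean-shifts $\bar A (G - G') = \bar A (I-P)(G-G')$, since the two differ by $\bar A P (G-G') = 0$.

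I would then conclude by comparing the two testing problems directly. For a fixed neighboring pair, the coordinate $K^G G$ of $\mathcal{O}_\attacker$ takes the identical value $K^G G = K^G G'$ under both hypotheses, so it is ancillary and may be discarded without altering the trade-off function; what remains is the Gaussian location family $\bar A G + \bar B Z$ versus $\bar A G' + \bar B Z$, with mean-shift $\bar A(G-G')$ and per-column covariance $\nu^2\,\bar B\transpose{\bar B}$, where $\nu$ is the common noise scale. The reduced view $\tilde{\mathcal{O}}_\attacker$ yields the same family with mean-shift $\bar A(I-P)(G-G')$ and the same covariance (the $B$-block is unchanged). By the previous paragraph these two families are identical for every neighboring pair, so the worst-case trade-off function---and thus the $\tfrac{1}{\sigma}$-value certified by \cref{th:main}---is the same for both mechanisms. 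The main obstacle to watch is that this equivalence cannot be obtained from post-processing alone: one checks that $\tilde{\mathcal{O}}_\attacker$ is a deterministic linear post-processing of $\mathcal{O}_\attacker$ (subtract $\bar A\pseudoinverse{{K^G}}(K^G G)$ from the first block), which only shows $\tilde{\mathcal{O}}_\attacker$ is \emph{no more} revealing---the wrong direction for transferring a guarantee back to the real attacker. It is precisely the invariance $K^G(G-G')=0$ that upgrades this inequality to equality and lets a GDP bound proved on the factorizable $\tilde{\mathcal{O}}_\attacker$ be read as a valid bound on $\mathcal{O}_\attacker$.
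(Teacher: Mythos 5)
Your proof is correct, and it rests on the same algebraic core as the paper's---splitting $G$ into $\pseudoinverse{{K^G}}K^G G$ plus its complement---but you complete the argument by a genuinely different and more careful route. The paper's proof is the chain of identities $AG+BZ = A\pseudoinverse{{K^G}}\left(K^G G\right) + \tilde{A}G + BZ$ followed by a single sentence: since the attacker knows $K^G G$, the two views have the same GDP properties. That sentence is really the claim that shifting by a quantity computable from the attacker's view is an invertible operation; as you correctly observe, plain post-processing only yields the uninteresting direction ($\tilde{\mathcal{O}}_\attacker$ is a deterministic function of $\mathcal{O}_\attacker$, hence at most as revealing), because $K^G G$ cannot be recovered from $\tilde{\mathcal{O}}_\attacker$ alone. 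You close the gap by comparing the two hypothesis-testing problems directly, and in doing so you surface the assumption that actually makes the lemma true: $K^G(G-G')=0$ for every neighboring pair, i.e., the noiseless combinations never involve the protected record. This is the intended semantics in the paper (known gradients are those of the attacker or of colluders), but its proof never states or uses it explicitly; without it the statement is false as written, since if $K^G(G-G')\neq 0$ the original view is perfectly distinguishing while the reduced view can even be $0$-GDP (your mean-shift computation shows $\tilde{A}(G-G')$ can vanish in that case). What each approach buys: the paper's argument is shorter and consistent with the informal side-information treatment used throughout the appendix; yours is self-contained at the level of trade-off functions---the noiseless block is an identical constant under both hypotheses, hence can be discarded, and the remaining Gaussian families coincide because $P(G-G')=0$ forces $\bar{A}(G-G')=\bar{A}(I-P)(G-G')$---and it makes explicit a hypothesis that the lemma statement should arguably include.
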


\begin{proof}
\begin{align}
    AG+BZ &= A\left(\pseudoinverse{{K^G}}K^G\right)G
    + A\left(I- \pseudoinverse{{K^G}}K^G\right)G
    +BZ\\
    &= A\pseudoinverse{{K^G}}\left(K^GG\right)
    + \begin{bmatrix}\bar{A}\left(I- \pseudoinverse{{K^G}}K^G\right) \\ K^G\left(I- \pseudoinverse{{K^G}}K^G\right)\end{bmatrix}G
    +BZ\\
    &= A\pseudoinverse{{K^G}}\left(K^GG\right)
    + \begin{bmatrix}\bar{A}\left(I- \pseudoinverse{{K^G}}K^G\right) \\ 0\end{bmatrix}G
    +BZ\\
    &= A\pseudoinverse{{K^G}}\left(K^GG\right)
    + \begin{bmatrix}\bar{A} \\ 0\end{bmatrix}\left(I- \pseudoinverse{{K^G}}K^G\right)G
    +BZ\\
    &= A\pseudoinverse{{K^G}}\left(K^GG\right)
    + \tilde{A}G
    +BZ\\
\end{align}
Then, as the attacker knows $K^GG$, the GDP properties of $AG+BZ$ are the same as those of $\tilde{A}G+BZ$, which concludes the proof.

\end{proof}

\subsection{Proof of Theorem~\ref{thm:MF from reduced attacker knowledge} for DP-D-SGD}

In this section, we make explicit the factorization of~\cref{thm:MF from reduced attacker knowledge} for the DP-SGD algorithm, where all noises are independent, under the different trust models.

\paragraph{LDP}
With LDP, the attacker has only access to messages, therefore, no reduction is needed, and we analyze the factorization of the messages directly.
We have $C_{\text{DP--SGD}}= \identity{nT}$, yielding $A_{\text{DP-D-SGD}}= B_{\text{DP-D-SGD}} = \workloadgradient{T}$.

\paragraph{PNDP}
PNDP considers a set $\cal A$ of attacker nodes. Each attacker observes
\begin{itemize}[leftmargin=.5cm]
\item the messages it receives,
\item its own gradients,
\item its own noise,
\item the messages it emits (which is a linear combination of previous information, so we will omit it).
\end{itemize}
To ease the proof, without loss of generality, we reorder G and Z such that their first lines correspond to the attackers' gradients and noise.

Define $\projectionmesspndp$ as:
\begin{equation}\label{eq:message projection pndpd}
    \projectionmesspndp = \identity{T} \kronecker K_\attacker
\end{equation}
where $K_\attacker [i, \Gamma_\attacker[i]] = 1$ for $i\in \intervalintegers{1}{\left|\Gamma_\attacker\right|}$, where $\Gamma_\attacker = \bigcup_{a\in\attacker}\Gamma_a$ represents the (ordered) list of neighbors of $\attacker$.
$\projectionmesspndp$ thus projects the set of all messages onto the set of messages received by $\attacker$.
Denote $a$ the number of attackers, and $b$ the total number of their neighbors.

Then, the attacker knowledge is
 $$\begin{bmatrix}
    (\projectionmesspndp\workloadgradient{T})\\
    \begin{matrix}I_{aT}&0\\
     0&0\end{matrix}
    \end{bmatrix}G
    + \begin{bmatrix}(\projectionmesspndp\workloadgradient{T})\\
    \begin{matrix}0 & 0
    \\I_{aT}&0\end{matrix}\end{bmatrix}Z
,$$ where $Z\in\RR^{nT\times d}$.
    After reduction, it becomes
$$\begin{bmatrix}(\projectionmesspndp\workloadgradient{T})\\\begin{matrix}0&0\\0 & 0 \end{matrix}\end{bmatrix}\begin{bmatrix}0&0\\0&I_{(n-a)T}\end{bmatrix}G
    + \begin{bmatrix}
        (\projectionmesspndp\workloadgradient{T})\\
        \begin{matrix} 
            0 & 0\\
            I_{aT}&0
        \end{matrix}
    \end{bmatrix}
    Z
    .$$
Hence $A= B
\begin{bmatrix}0_{aT \times aT}&0_{aT\times(n-a)T}\\0_{(n-a)T\times aT}&I_{(n-a)T}\end{bmatrix},$ which concludes the proof for PNDP setting.

\subsection{Proof of Theorem~\ref{thm:MF from reduced attacker knowledge} for ANTI-PGD}

In this section, we make explicit the factorization of~\cref{thm:MF from reduced attacker knowledge} for the AntiPGD algorithm, which was studied in \citet{koloskovaGradientDescentLinearly2023}.

\paragraph{LDP} Following the factorization found in a centralized setting~\citep{koloskovaGradientDescentLinearly2023}, we have $C_{\text{\Antipgd}} = 1_{t\geq t'} \kronecker \identity{n}$, $A_{\text{ANTI-PGD}} = \workloadgradient{T}$ and:
\[ B_{\text{\Antipgd}} = \begin{bmatrix}
        \identity{n}             & 0                        & \dots  & 0            \\
        W - \identity{n}         & \identity{n}             & \dots  & 0            \\
        \vdots                   & \vdots                   & \ddots & \vdots       \\
        W^{T-2} (W-\identity{n}) & W^{T-3} (W-\identity{n}) & \dots  & \identity{n}
    \end{bmatrix}
\]
This notation shows how noise cancellation is propagated and lagging one step behind in terms of communication. %

\paragraph{PNDP}
PNDP considers a set $\cal A$ of attacker nodes.
To ease the proof, without loss of generality, we reorder G and Z such that their
first lines correspond to the attackers gradients and noise.

Define $\projectionmesspndp$ as the projection of all messages onto the set of messages received by $\attacker$.
Denote $a$ the number of attackers, and $b$ the total number of their neighbors.

An alternative expression of messages in ANTI-PGD is $\workloadmessages{T}\left(G+DZ\right)$, where $D$ is the invertible lower-triangular matrix $\begin{bmatrix}
        1             &            \\
        -1         & 1 &  &0\\
        0          &-1 & 1     \\
        \vdots                   & \ddots                   & \ddots & \ddots       \\
        0 & \dots &0 &-1& 1
    \end{bmatrix}.$

Then, the attacker knowledge is
 $$\begin{bmatrix}(\projectionmesspndp\workloadgradient{T})\\
 \begin{matrix}
    I_{aT}&0\\
    0 & 0
    \end{matrix}
    \end{bmatrix}G
    + \begin{bmatrix}\left(\projectionmesspndp\workloadgradient{T}D\right)\\
    \begin{matrix}
        0 & 0\\
        I_{aT}&0
    \end{matrix}
    \end{bmatrix}Z
,$$ where $Z\in\RR^{nT\times d}$.
    After reduction, it becomes
$$\begin{bmatrix}
    \left(\projectionmesspndp\workloadgradient{T}\right)\\\begin{matrix}
        0&0\\
        0 &0
    \end{matrix}
    \end{bmatrix}\begin{bmatrix}0&0\\0&I_{(n-a)T}\end{bmatrix}G
    + \begin{bmatrix}\left(\projectionmesspndp\workloadgradient{T}D\right)\\
    \begin{matrix}
        0&0\\
        I_{aT}&0
    \end{matrix}
    \end{bmatrix}Z
    .$$
Hence $A= BD^{-1}
\begin{bmatrix}0_{aT \times aT}&0_{aT\times(n-a)T}\\0_{(n-a)T\times aT}&I_{(n-a)T}\end{bmatrix},$ which concludes the proof for PNDP setting.

\subsection{Proof of Theorem~\ref{thm:MF from reduced attacker knowledge} for Muffliato-SGD}

In this section, we make explicit the factorization of~\cref{thm:MF from reduced attacker knowledge} for the Muffliato algorithm, which was studied in \citet{cyffersMuffliatoPeertoPeerPrivacy2022}.

\paragraph{LDP}
Muffliato-SGD~\citep{cyffersMuffliatoPeertoPeerPrivacy2022} requires a different communication workload than $\workloadgradient{T}$. That is because they consider multiple communication rounds before computing a new gradient. Formally, they define $K$ the number of communication steps between two gradients, and they perform $T$ iterations. This means that the set of messages shared is:
\[
    \mathcal{O}_{\text{Muffliato-SGD}} = \workloadgradient{KT}\left( \identity{T} \kronecker
    \begin{bmatrix}
        \identity{n} \\
        \mathbf{0}_{Kn \times n}
    \end{bmatrix}
    \right) \left(G + Z\right).
\]
We see here that Muffliato-SGD adds uncorrelated local noises ($\pseudoinverse{C}= \identity{nT}$), and thus we have $C_{\text{Muffliato-SGD}} = \identity{nT}$, as well as $A_{\text{Muffliato-SGD}} = B_{\text{Muffliato-SGD}} = \workloadgradient{KT}\left( \identity{T} \kronecker
    \begin{bmatrix}
        \identity{n} \\
        \mathbf{0}_{(K-1)n \times n}
    \end{bmatrix}
    \right)$.

\paragraph{PNDP} For Muffliato-SGD(K), we assume $T = KT'$, where $T'$ is the number of gradient descent, and has $K$ communication rounds between each gradient descent. 

PNDP considers a set $\cal A$ of attacker nodes.
To ease the proof, without loss of generality, we reorder G and Z such that their first lines correspond to the attackers gradients and noise.

Define $\projectionmesspndp$ as the projection of all messages onto the set of messages received by $\attacker$.
Denote $a$ the number of attackers, and $b$ the total number of their neighbors.

Then, the attacker knowledge is
 $$\begin{bmatrix}\left(\projectionmesspndp A_{\text{Muffliato-SGD}}\right)\\
 \begin{matrix}
    I_{aT}&0\\
    0 & 0
 \end{matrix}
 \end{bmatrix}G
    + \begin{bmatrix}\left(\projectionmesspndp A_{\text{Muffliato-SGD}}\right)\\
    \begin{matrix}
    0 & 0 \\
    I_{aT}&0
    \end{matrix}
    \end{bmatrix}Z
,$$ where $Z\in\RR^{nT\times d}$.
    After reduction, it becomes
$$\begin{bmatrix}
    \left(\projectionmesspndp A_{\text{Muffliato-SGD}}\right)\\
    \begin{matrix}
        0&0\\
        0&0
    \end{matrix}
    \end{bmatrix}\begin{bmatrix}0&0\\0&I_{(n-a)T}\end{bmatrix}G
    + 
    \begin{bmatrix}\left(\projectionmesspndp A_{\text{Muffliato-SGD}}\right)\\
    \begin{matrix}
        0&0\\
        I_{aT}&0
    \end{matrix}
    \end{bmatrix}Z
    .$$
Hence $A= B
\begin{bmatrix}0_{aT \times aT}&0_{aT\times(n-a)T}\\0_{(n-a)T\times aT}&I_{(n-a)T}\end{bmatrix},$ which concludes the proof for PNDP setting.

Note that while the formulation is similar to the proof for DP-D-SGD, the workload matrix here has $K$ times more rows.

\subsection{Proof of Theorem~\ref{thm:MF from reduced attacker knowledge} for DECOR}

In this section, we make explicit the factorization of~\cref{thm:MF from reduced attacker knowledge} for the DECOR algorithm, which was studied in \citet{allouahPrivacyPowerCorrelated2024}.

First, let us provide the correlation matrix used in DECOR $C_{\text{DECOR}}$.
Correlation is performed \emph{edge-wise} in the graph, as each neighbor adds a noise correlated with their neighbors to their local noise.
Consider an arbitrary orientation $E$ of edges $\mathcal{E}$ of the symmetric graph $\mathcal{G}$. The correlation matrix has the following structure:
\begin{align*}
    C_{\text{DECOR}}& =\identity{T}\kronecker\pseudoinverse{
    \begin{bmatrix}
    \identity{n} & \pseudoinverse{C_{\text{nodes}}}
    \end{bmatrix}}
    \\
    \forall e\in\intervalintegers{1}{\left|E\right|},
    \forall i \in \intervalintegers{1}{n}:
    \pseudoinverse{C_{\text{nodes}}}[i,e] & = \begin{cases}
                                                  1 \text{ if } E[e] = (i,j)  \\
                                                  -1 \text{ if } E[e] = (j,i) \\
                                                  0 \text{ otherwise}
                                              \end{cases}.
\end{align*}
where the first part of $\begin{bmatrix}I_{nT} & \pseudoinverse{C}\end{bmatrix}$ corresponds to the local private noise, and the second part to the "secrets".
Interestingly, multiple correlation matrices (and thus equivalent factorizations) exist for DECOR, as there are multiple ways to generate possible orientations of the edges of the graph $\mathcal{G}$.

\paragraph{LDP}
The messages exchanged by DECOR take the form $-\eta\workloadmessages{T}
    \left(G
    + \pseudoinverse{C_{\text{DECOR}}}Z
    \right)$. 
Hence $A=\workloadmessages{T}$, $B=\workloadmessages{T}\begin{bmatrix}I_{nT} & \pseudoinverse{C}\end{bmatrix}$, and $A = B\begin{bmatrix}I_{nT}\\ 0\end{bmatrix},$ which concludes the proof for LDP setting.

\paragraph{PNDP} With DECOR, each node knows some of the noises of its neighbors. Therefore, Sec-LDP is more appropriate to describe the knowledge gathered by some attackers.

\paragraph{Sec-LDP}
Let us consider a set $\cal A$ of attacker nodes.
To ease the proof, without loss of generality, we reorder G such that its first lines correspond to the attackers gradients.
For Z, we first gather the local noise of the attackers, then other local noises, and finally the "secrets" noise.

Define $\projectionmesspndp$ as the projection of all messages onto the set of messages received by $\attacker$.
Denote $a$ the number of attackers.

Then, the attacker knowledge is
 $$\begin{bmatrix}\projectionmesspndp\workloadgradient{T}\\
 \begin{matrix}I_{aT}&0\\0&0\\0&0\end{matrix}\end{bmatrix}G
    + \begin{bmatrix}\projectionmesspndp\workloadmessages{T}\begin{bmatrix}\identity{nT}&\pseudoinverse{D}\end{bmatrix}\\
    \begin{matrix}0&0&0\\I_{aT}&0&0\\
    0&0&E\end{matrix}\end{bmatrix}Z
,$$ where $Z\in\RR^{nT\times d}$ and $D$ is the correlation matrix of secrets and $D$ a subpart of $E$.
    After reduction, it becomes
$$\begin{bmatrix}\projectionmesspndp\workloadmessages{T}\\\begin{matrix}0&0\\0&0\\0&0\end{matrix}\end{bmatrix}\begin{bmatrix}0&0\\0&I_{(n-a)T}\end{bmatrix}G
    + \begin{bmatrix}\projectionmesspndp\workloadmessages{T}\begin{bmatrix}\identity{nT}&\pseudoinverse{D}\end{bmatrix}\\
    \begin{matrix}0&0&0\\I_{aT}&0&0\\
    0&0&E\end{matrix}\end{bmatrix}Z
    .$$
Hence $A= B
\begin{bmatrix}0_{aT \times aT}&0_{aT\times(n-a)T}\\
0_{(n-a)T\times aT}&I_{(n-a)T}\\
0_{b \times aT}&0_{b\times(n-a)T}
\end{bmatrix},$ where $b$ is the number of columns of $\pseudoinverse{D}$, which concludes the proof for PNDP setting.

Note that while the formulation is similar to the proof for DP-D-SGD, the workload matrix here has $K$ times more rows.

\subsection{Proof of Theorem~\ref{thm:MF from reduced attacker knowledge} for Zip-DL}

In this section, we make explicit the factorization of~\cref{thm:MF from reduced attacker knowledge} for the Zip-DL algorithm, which was introduced in \citet{biswasLowCostPrivacyPreservingDecentralized2025}.

Zip-DL considers message-wise noises, with one noise per message sent between nodes. For each node $i$, let $\Gamma_i$ be its set of neighbors.
Define $s_i = \left|\Gamma_i\right|$ the number of messages sent by $i$ and $s = \sum_{i=1}^{n}s_i$ the total number of messages.

Zip-DL is already expressed under a matrix form~\citep{biswasLowCostPrivacyPreservingDecentralized2025}: by introducing $n^2$ virtual nodes, and translating to express the gossip matrix $\gossmatrix{}$ under this virtual equivalent as $\messmatrix \in\RR^{n^2 \times n^2}$ to encode messages. Formally, we have $\messmatrix[ni + j + 1, nj + i +1] = \gossmatrix{}[i,j] $. Furthermore, considering an averaging matrix $\avgmatrix$, and a correlation matrix $\pseudoinverse{C_{\text{Zip-DL}}}$ (coined $\hat{C}$ in~\cite{biswasLowCostPrivacyPreservingDecentralized2025}), Zip-DL can be expressed as:
\begin{equation}
    \model{t+1} = \avgmatrix \messmatrix \left( 1_n (\model{t} - \lr G_t) + \pseudoinverse{C_{\text{Zip-DL}}}Z\right)
\end{equation}
We can now define the messages shared on the network, and viewed by an LDP attacker $\attacker$:
\begin{equation}
    \mathcal{O}_\attacker =
    \workloadmessages{T}
    \left(
    \left(\identity{nT} \kronecker 1_n\right)G
    +
    \left(\identity{T} \kronecker \pseudoinverse{C_{\text{Zip-DL}}}
        \right)Z
    \right),
\end{equation}
\begin{equation}
    \workloadmessages{T} =
    \left( \identity{T}\kronecker \tilde{I}_{n^2}\right)
    \begin{bmatrix}
        \identity{n^2}                            & 0                                         & \dots  & 0              \\
        \avgmatrix \messmatrix                    & \identity{n^2}                            & \dots  & 0              \\
        \vdots                                    & \vdots                                    & \ddots & \vdots         \\
        \left(\avgmatrix \messmatrix\right)^{T-1} & \left(\avgmatrix \messmatrix\right)^{T-2} & \dots  & \identity{n^2}
    \end{bmatrix}
\end{equation}
where $\tilde{I}_{n^2}$ is a diagonal matrix filled with zeros except for $\tilde{I}_{n^2}[ni+j +1, ni+j+1] = 1 \iff j\in \Gamma_i$.
Intuitively, $\avgmatrix \messmatrix$ corresponds to one round of averaging.
This is typically a very sparse matrix, even more so because many lines will be zero when considering communication graphs that are not the complete graph.

\paragraph{LDP}
$A=\workloadmessages{T}\left(\identity{nT} \kronecker 1_n\right)$, $B=\workloadmessages{T}\left(\identity{nT} \kronecker 1_n\right)\left(\identity{T} \kronecker \pseudoinverse{C_{\text{Zip-DL}}}\right)$, and $C= \left(\identity{T} \kronecker C_{\text{Zip-DL}}\right)$ concludes the proof for Zip-DL algorithm.

\paragraph{PNDP}
For the ease of notation, we assume here that for any node $i$, $s_i=c$. The generalization is straightforward.
To ease the proof, without loss of generality, we reorder $G$ and $Z$ such that their first lines correspond to the attackers gradients and noise.

We denote $\cal A$ the set of attacker nodes, and assume for simplicity there is only one such attacker node (extension to more simply requires doing this decomposition multiple times, or extending the projection matrices). Denote $\projectionmesspndp\in\RR^{cT \times cnT}$ the matrix selecting the messages received by $\attacker$.

The PNDP attacker knowledge is therefore 
 $$\begin{bmatrix}\left(\projectionmesspndp\workloadmessages{T}\right)\\
 \begin{matrix}
    \left(I_{T}\kronecker
    \begin{bmatrix}1&\mathbf{0}_{1\times {c-1}}\end{bmatrix}\right)&0
    \\0 &0
    \end{matrix}
    \end{bmatrix}(I_{nT}\kronecker\mathbf{1}_{c})G
    + \begin{bmatrix}\left(\projectionmesspndp\workloadmessages{T}\right)\\
    \begin{matrix}
    0&0\\
        I_{cT}&0
    \end{matrix}\end{bmatrix}Z
    ,$$ where $Z\in\RR^{ncT\times d}$.
    After reduction, it becomes
$$\begin{bmatrix}
\left(\projectionmesspndp\workloadmessages{T}\right)\\
\begin{matrix}
    0&0\\
    0&0
\end{matrix}
\end{bmatrix}\begin{bmatrix}0&0\\0&\left(I_{(n-1)T}\kronecker\mathbf{1}_{c}\right)\end{bmatrix}G
    + \begin{bmatrix}\left(\projectionmesspndp\workloadmessages{T}\right)\\
    \begin{matrix}
        0&0\\
        I_{cT}&0
    \end{matrix}\end{bmatrix}Z
    .$$
    Hence $A= B
    \begin{bmatrix}0_{cT \times cT}&0_{cT\times(n-1)cT}\\0_{(n-1)cT\times cT}&I_{(n-1)T}\kronecker\mathbf{1}_{c}\end{bmatrix},$ which concludes the proof for Zip-DL algorithm.

\newstuff{
\subsection{Extending notations to time-varying gossip}%
\label{subsec:time-varying gossip notations}
We extend the notations introduced in \Cref{sec:wam-up:MF-D-SGD} to time-varying graphs encoded as a sequence of gossip matrices $\gossmatrix{t}$, for $t\in\intervalintegers{1}{T}$.
In this more general context, the workload matrix $\workloadgradient{T}$ becomes:

\begin{align*}
    \workloadgradient{T} = \begin{bmatrix}
        \identity{n} & 0 & 0 & \ldots & 0 \\
        \gossmatrix{1} & \identity{n} & 0 & \ldots & 0\\
        \gossmatrix{2} \gossmatrix{1} & \gossmatrix{2} &\identity{n} & \ldots & 0\\    
        \vdots & \vdots & \vdots & \ddots & \vdots\\
        \Pi_{t=T}^{1} \gossmatrix{t} & \Pi_{t=T-1}^{2} \gossmatrix{t} & \Pi_{t=T-1}^{3} \gossmatrix{t} & \ldots & \identity{n}
    \end{bmatrix}
\end{align*}

This new definition of $\workloadgradient{T}$ encodes how messages spread through time using the sequence of gossip matrices $(\gossmatrix{t})_t$. The formalization of the MF-D-SGD algorithm (initially presented in~\cref{eq:MF-D-SGD}) similarly needs adapting by replacing $\identity{T}\kronecker \gossmatrix{}$ by a block diagonal matrix of all $\gossmatrix{t}$:
\begin{align}
    \theta \;=\;  \begin{bmatrix}
        \gossmatrix{1} & 0 & \ldots &0\\
        0 & \gossmatrix{2} & \ldots & 0\\
        \vdots & \vdots & \ddots & \vdots\\
        0 & 0 & \ldots &\gossmatrix{T}\\
    \end{bmatrix}\left(M \theta_0 - \eta\, \workloadgradient{T}\bigl(G+ \pseudoinverse{C}Z\bigr)\right),
\end{align}

} 
\section[Proof of Theorem~\ref{th:main}]{Proof of~\cref{th:main}}\label{sec:proof main theorem}

\begin{proof}(of \cref{th:main})

We denote $\mechanism \equiv \mechanism'$ two mechanisms that have the same $\mu$-GDP properties.
In this proof, this will either be because they are distributionally equivalent, or because they are an invertible operation away from each other (meaning they have the same $\mu$-GDP properties by post-processing).  

We start by proving the following equivalences:
\begin{align*}
    {\cal M} &: G \to B(CG+Z)\\
    &\equiv {\cal M}_1 : G \to B'(CG+Z)
    &&\triangleright \text{ with }  B'\in\RR^{\rank(B)\times\nbnoise}
    \text{ and $B'C$ an echelon matrix}\\
    &\equiv {\cal M}_2 : G \to \left[L\,0\right](QCG+Z)
    &&\triangleright \text{ with } B'=\left[L \,0\right]Q \text{ the LQ decomposition of $B'$}\\
    &\equiv {\cal M}_3 : G \to (\tilde{Q}CG+\tilde{Z})
    &&\triangleright \text{ with $\tilde{Q}$ (resp. $\tilde{Z}$) the first $\rank \left(B'\right)$ rows of $Q$ (resp. $Z$)}
\end{align*}
Then, since $\mechanism_3$ can operate in the continual release model, those privacy guarantees will transfer back to $\mechanism$. 

\begin{algorithm}[ht]
\caption{Rows Selection for Matrix $B$}%
\label{alg:rows_selection}
\begin{algorithmic}[1]
\Require Matrix $B \in \mathbb{R}^{a \times b}$
\State $R \gets []$ \Comment{Selected rows (empty matrix)}
\State $P \gets []$
\For{$i = 1$ to $a$}
    \If{$B_{[i]}$ is not a linear combination of rows in $R$}
        \State $R \gets \begin{pmatrix}
            R \\
            B_{[i]}
        \end{pmatrix}$
        \State $P \gets \begin{pmatrix} P \\ \transpose{e_i} \end{pmatrix}$ \Comment{$e_i\in\RR^{a}$ selects line $i$ of BC}
    \EndIf
\EndFor
\State \Return $P$
\end{algorithmic}
\end{algorithm}

\paragraph{Proof that ${\cal M} \equiv {\cal M}_1$: }
Let $P$ be a projection matrix selecting $\rank(B)$ rows of $B$, in the original order, and such that $\rank(PB)=\rank(B)$. $P$ can be obtained through~\cref{alg:rows_selection}.
Denote $B'$ the matrix $PB$. By design, $B'$ is full-rank, $B'\in\RR^{\rank(B)\times\nbnoise}$, $\rank(B')=\rank(B)\leqslant\nbnoise$, and $B'C$ is an echelon matrix.

Denote $S$ the rows of $B$ not in $B'$. By design, they are linear combinations of rows of $B'$, meaning there exists a matrix $D$ such that $S=DB'$.

As $S$ is the complement of $B'$, there exists a permutation matrix $\pi$ such that $$B = \pi\begin{bmatrix}B'\\S\end{bmatrix}= \pi\begin{bmatrix}B'\\DB'\end{bmatrix}= \pi\begin{bmatrix}\identity{\rank{B}}\\D\end{bmatrix}B',$$
and finally ${\cal M} = \pi\begin{bmatrix}\identity{\rank{B}}\\D\end{bmatrix}{\cal M}_1.$

As also ${\cal M}_1 = P{\cal M}$, by applying post-processing in both directions, we have ${\cal M} \equiv {\cal M}_1$.

\paragraph{Proof of $\mechanism_1 \equiv \mechanism_2$ :} Consider an $LQ$-decomposition of $B'$~\cite{}:  since $\rank(B') = \rank(B)\leq m$ we can find a decomposition $B' = \begin{bmatrix}
    L & 0
\end{bmatrix}Q$ 
such that $L\in\RR^{\rank(B') \times \rank(B')}$ is lower triangular and $Q\in\RR^{m\times m}$ is an orthonormal matrix~\cite{trefethenNumericalLinearAlgebra1997}.
Then, $\mechanism_1(G) = \begin{bmatrix}
    L & 0
\end{bmatrix} (QCG + QZ)$ for any $G$. Since $Q$ is an orthonormal matrix, this is distributionally equivalent to $\mechanism_2$.

\paragraph{Proof of $\mechanism_2 \equiv \mechanism_3$:}
$L$ in $\mechanism_2$ has been constructed to be square and lower triangular matrix of full rank (as $B'$ is of full rank). Thus, $L$ is invertible. This means, by post-processing using an invertible operator $L$, that the privacy guarantees of $\mechanism_2$ are the one of $\begin{bmatrix}
    \identity{\rank(B')} & 0
\end{bmatrix}(QCG+Z)$. It suffices to name $\tilde{Q} =\begin{bmatrix}
    \identity{\rank(B')} & 0
\end{bmatrix}Q$ and $\tilde{Z} = \begin{bmatrix}
    \identity{\rank(B')} & 0
\end{bmatrix}Z$ to see this corresponds to $\mechanism_3$.

\paragraph{Proof of $\frac{1}{\sigma}$-GDP of ${\cal M}_3$:}
$\mechanism_3$ is an instance of the Gaussian mechanism on $\tilde{Q}CG$. Since $\tilde{Z} \sim \gaussdistr{0}{\nu^2 }^{\rank(B')\times\modelsize}$, this mechanism is $\frac{1}{\sigma}$-GDP if $\nu=\sigma\sens_\Pi(\tilde{Q}C)$.
Additionally, we have $PA = B'C = L\tilde{Q}C$.
Then, $\tilde{Q}C = L^{-1}PA$, which implies that $\tilde{Q}C$ is lower echelon (by design of $P$ and $L$). Therefore, for the same value of $\nu$, $\mechanism_3$ remains $\frac{1}{\sigma}$-GDP when considering adaptive gradients.

Note that
\begin{align}
\sens_\Pi(\tilde{Q}C)
&=\max _{G \simeq_\Pi G'}\left\|\tilde{Q}C\left(G-G'\right)\right\|_{F}\\
&=\max _{G \simeq_\Pi G'} \sqrt{\trace\left(\transpose{\left(C\left(G-G'\right)\right)}\transpose{\tilde{Q}}\tilde{Q}C\left(G-G'\right)\right)}\\
&=\max _{G \simeq_\Pi G'}\sqrt{\trace\left(\transpose{\left(C\left(G-G'\right)\right)}\pseudoinverse{B}BC\left(G-G'\right)\right)}\label{step:QQ->BB}\\
&=\max _{G \simeq_\Pi G'}\left\|C\left(G-G'\right)\right\|_{\pseudoinverse{B}B}\\
&= \sens_\Pi(C;B),
\end{align}
where~\cref{step:QQ->BB} stands as the rows of $\tilde{Q}$ and $B$ span the same space.

Then, applying Lemma 3.15 of~\cite{pillutla2025correlatednoisemechanismsdifferentially} allows us to derive the following bound:
\begin{align}
    \sens_\Pi(C;B) 
    &= \sens_\Pi(\tilde{Q}C)
    \leq \max_{\pi \in \Pi} \sum_{s,t\in\pi}\left|\left(\transpose{\left(\tilde{Q}C\right)}\tilde{Q}C\right)_{s,t}\right|\label{eq:sens_first}\\
    &\leq\max_{\pi \in \Pi} \sum_{s,t\in\pi}\left|\left(\transpose{C}\transpose{\tilde{Q}}\tilde{Q}C\right)_{s,t}\right|\\
    &\leq\max_{\pi \in \Pi} \sum_{s,t\in\pi}\left|\left(\transpose{C}\pseudoinverse{B'}B'C\right)_{s,t}\right|\\
    &\leq\max_{\pi \in \Pi} \sum_{s,t\in\pi}\left|\left(\transpose{C}\pseudoinverse{B}BC\right)_{s,t}\right|,
\end{align}
where \cref{eq:sens_first} corresponds to Part (a) of Lemma 3.15 in \cite{pillutla2025correlatednoisemechanismsdifferentially} (generalization to batch releases). We also have equality when the $\left(\transpose{\left(\tilde{Q}C\right)}\tilde{Q}C\right)[t, \tau] >0$ for all $t,\tau\in \pi$, for all $\pi\in\Pi$).

\end{proof}

\section{Additional experiments}\label{sec:additional experiments}
In this section, we provide missing experimental details and results from~\cref{sec:experiments}.

\subsection{Housing privacy-utility tradeoff}

\newcommand{\vertMinusSpace}{\vspace{-0.8em}}
\newcommand{\vertPlusSpace}{\vspace{0.6em}}

\begin{figure}
    \centering
    \begin{minipage}{0.49\linewidth}
        \centering
        \includegraphics[width=\textwidth]{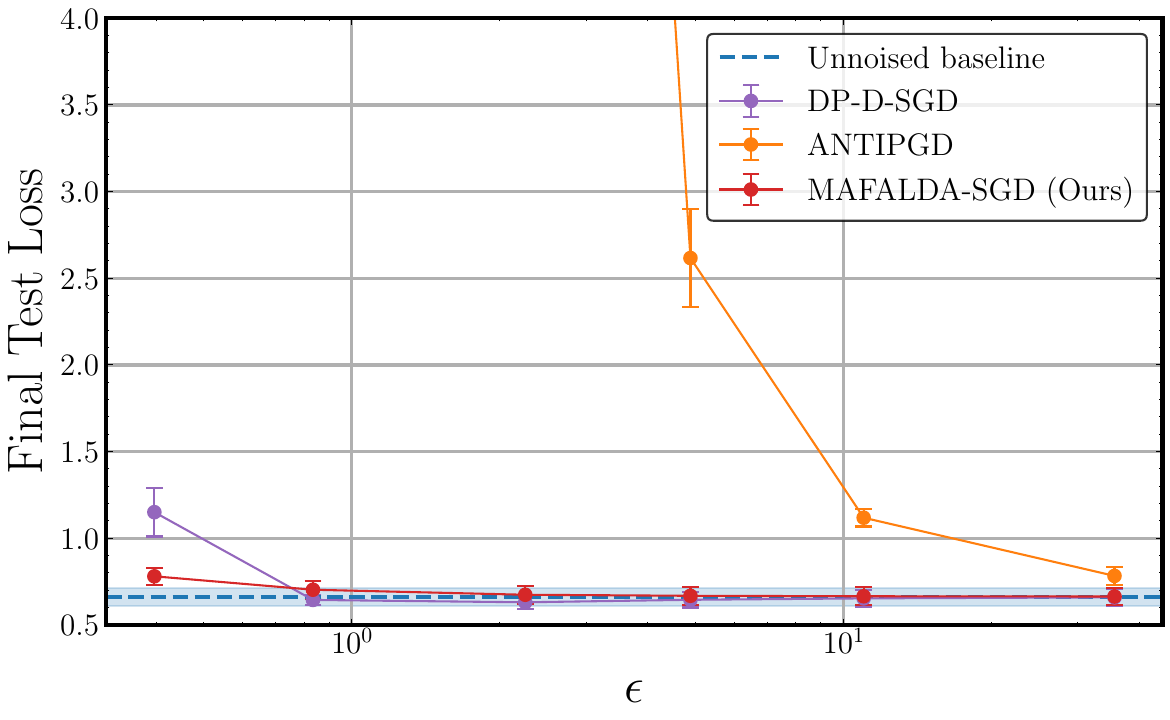}\vertMinusSpace
        \caption*{Florentine Families}\vertPlusSpace
    \end{minipage}
    \hfill
    \begin{minipage}{0.49\linewidth}
        \centering
        \includegraphics[width=\textwidth]{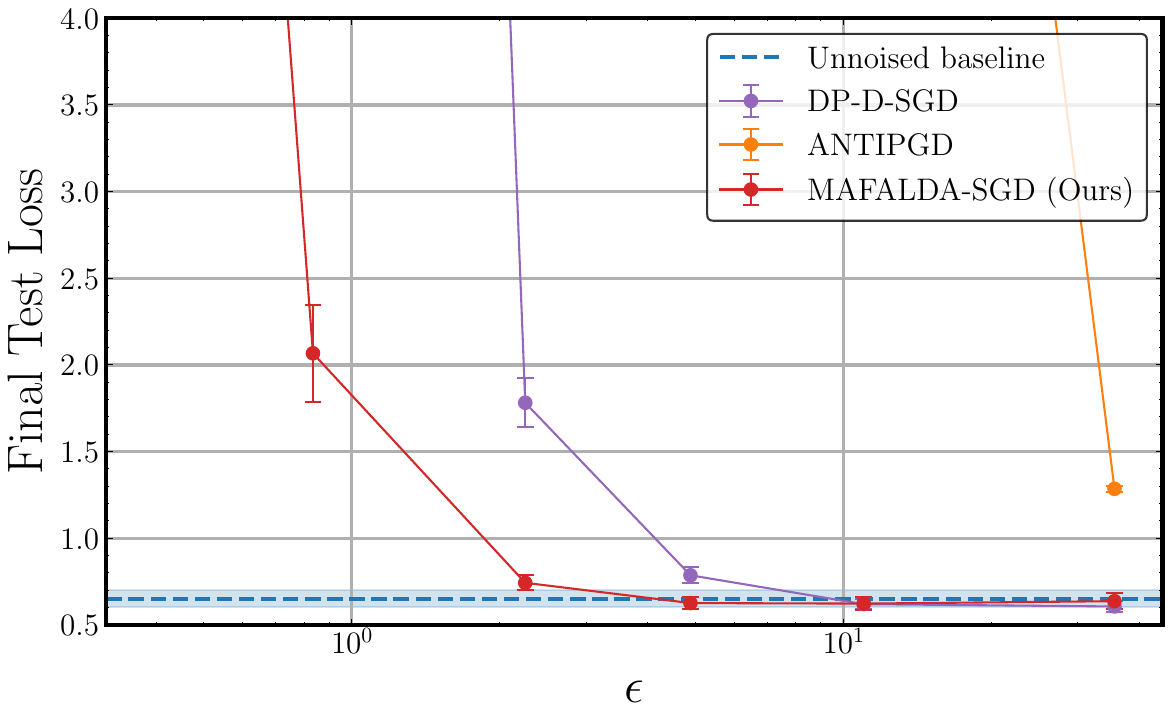}\vertMinusSpace
        \caption*{Facebook Ego graph}\vertPlusSpace
    \end{minipage}
    \hfill
    \begin{minipage}{0.49\linewidth}
        \centering
        \includegraphics[width=\textwidth]{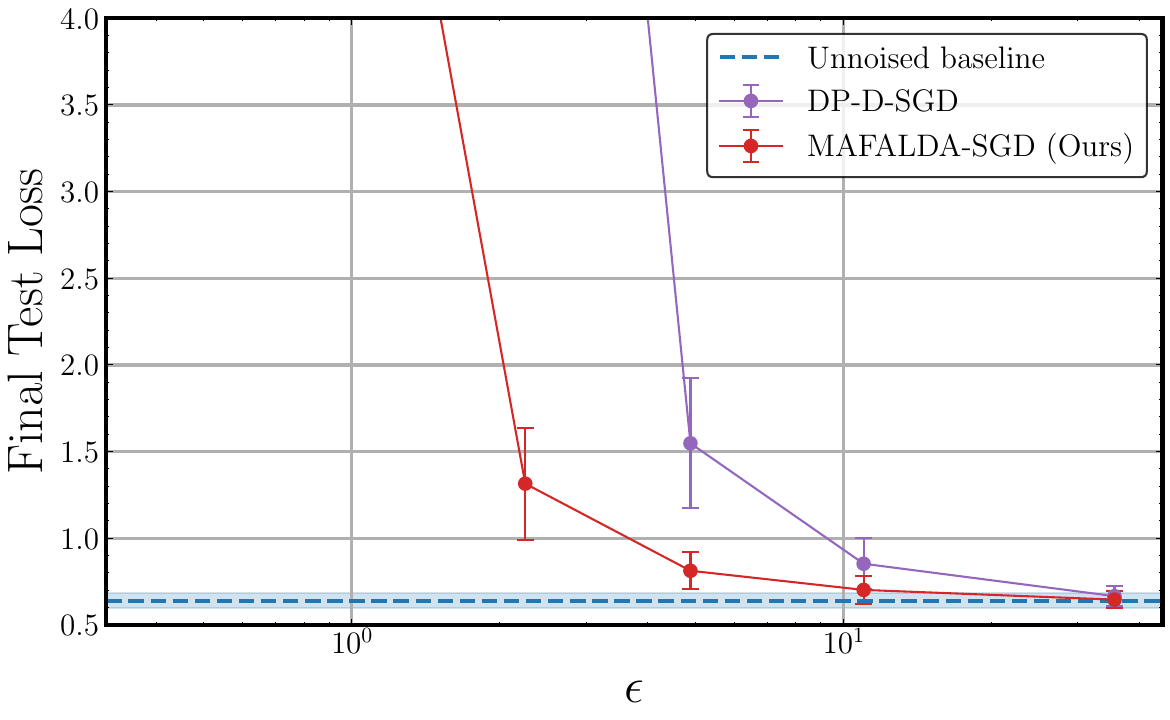}\vertMinusSpace
        \caption*{Peertube}
    \end{minipage}
    \hfill
    \begin{minipage}{0.49\linewidth}
        \centering
        \includegraphics[width=\textwidth]{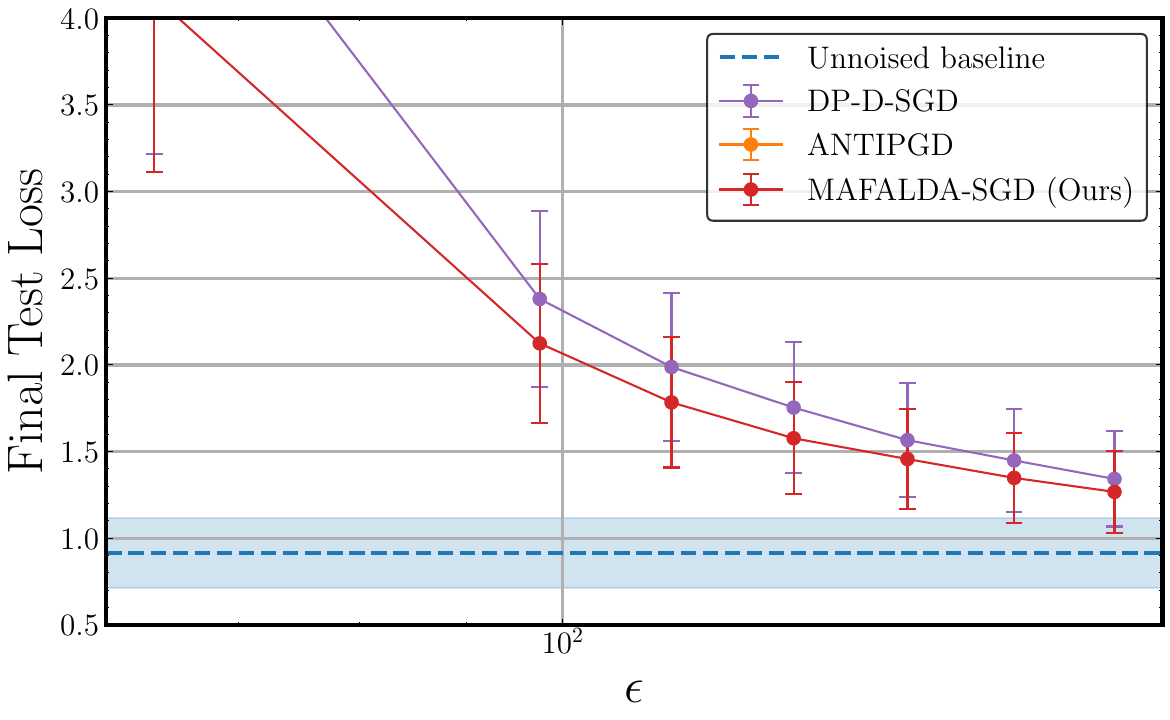}\vertMinusSpace
        \caption*{Misskey}
    \end{minipage}
    
    \caption{Privacy-utility tradeoff on the housing dataset for various graphs.}
    \label{fig:housing_privacy_utility_tradeoff}
\end{figure}

We simulate all baselines on the Housing dataset for various values of $\sigma$. We display their utility privacy tradeoff, by considering the utility as a function of $\epsilon$ for an $(\epsilon,1e^{-6})$-DP privacy guarantee.
\cref{fig:housing_privacy_utility_tradeoff} reports the resulting privacy budget. For all graphs, the privacy-utility tradeoff offered by \sys improves upon that of the existing literature.

Smaller graphs only yield better privacy/utility tradeoffs because data is partitioned into bigger batches on smaller graphs, as there is more data for each node, netting better privacy properties.

\subsection{Femnist privacy-utility tradeoff}
\begin{figure}
    \centering
    \begin{minipage}{0.49\linewidth}
        \centering
        \includegraphics[width=\textwidth]{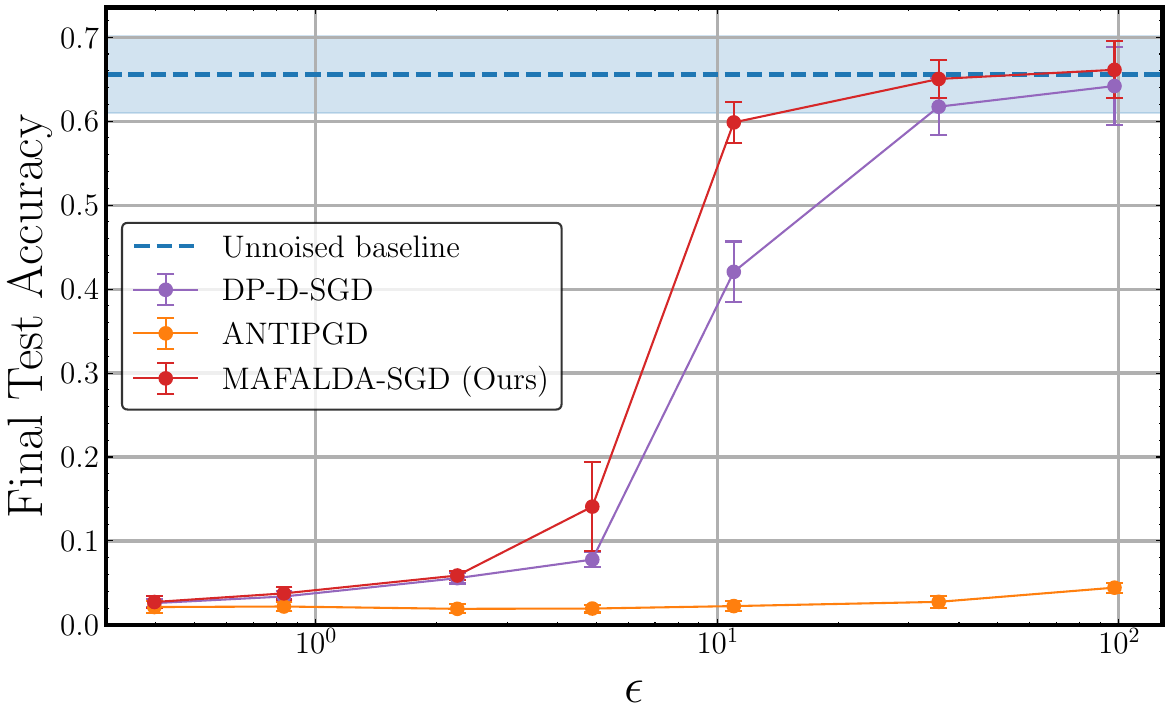}\vertMinusSpace
        \caption*{Florentine Families}
    \end{minipage}
    \hfill
    \begin{minipage}{0.49\linewidth}
        \centering
        \includegraphics[width=\textwidth]{figures/final_test_acc_vs_epsilon_graphego.pdf}\vertMinusSpace
        \caption*{Facebook Ego graph}
    \end{minipage}
    \caption{
        \newstuff{Privacy-utility tradeoff on the FEMNIST dataset for various graphs.}}
    \label{fig:femnist_privacy_utility_tradeoff}
\end{figure}

    We also include \cref{fig:femnist_privacy_utility_tradeoff} to present the privacy-utility tradeoffs for the FEMNIST dataset on the Florentine families graph and the Facebook Ego graph. In both cases, better test accuracy is consistently reached by \sys for different values of $\epsilon$.
\section[Proof of Lemma~\ref{lem:equivalent optimization problem mafalda}]{Proof of \cref{lem:equivalent optimization problem mafalda}}\label{sec:missing proofs}

In this section we present the proof of \cref{lem:equivalent optimization problem mafalda}, which is used in \Cref{sec:mafalda} to justify \Cref{alg:mafalda sgd}.

\begin{proof}(\cref{lem:equivalent optimization problem mafalda})

    Consider local correlation and $\projection_\attacker = \identity{nT}$. Then, the optimization loss of~\cref{eq:optimization loss} becomes:
    \begin{align}\label{eq: loss optimization local big term}
        \mathcal{L}_{opti}\left(\workloadgradient{T}, C_{local}\right) 
        = 
        \sens_{\Pi_{\text{local}}}(C_{local})^2 
        \norm{\left(\identity{T}\kronecker W\right)\workloadgradient{T}
        \left(
            \pseudoinverse{C_{local}} 
            \kronecker 
            \identity{n}
        \right)}_F^2
    \end{align}
    This follows from the property of the Moore-Penrose pseudoinverse:
    $\pseudoinverse{\left(C_{local} \kronecker \identity{n}\right)} = \pseudoinverse{C_{local}} \kronecker \identity{n}$, as well as the fact that $\sens_\Pi(C_{\text{local}} \kronecker \identity{nT}; B) = \sens_{\Pi_{\text{local}}}(C_{\text{local}})$ under a localized participation scheme $\Pi$ that is identical for all nodes.

    As highlighted in~\cite{loanUbiquitousKroneckerProduct2000}, the commutation matrix $\permutation^{(n,T)}$ (also called the vec-permutation matrix~\cite{hendersonVecpermutationMatrixVec1981}) is a way to change the representation from stacking through time the observations of the global system to stacking the observations locally of each node through time.
    In other words, for any matrix $X\in\RR^{n\times n}, Y \in \RR^{T\times T}$, we have:
    \begin{align*}
        \permutation^{(n,T)} \left(X \kronecker Y\right) = \left(Y \kronecker X\right) \permutation^{(T,n)}
    \end{align*}

    It suffices to focus on the norm, since the sensitivity term remains unchanged.
    Let $A = \left(\identity{T}\kronecker W\right)\workloadgradient{T}\permutation^{(n,T)}$.
    First, note that $\permutation^{(T,n)} $ is a permutation matrix, so we have:
    \begin{align*}
        \norm{\left(\identity{T}\kronecker W\right)\workloadgradient{T}
    \left(
        \pseudoinverse{C_{local}} 
        \kronecker 
        \identity{n}
    \right)}_F^2
    &= 
    \norm{\left(\identity{T}\kronecker W\right)\workloadgradient{T}
    \left(
        \pseudoinverse{C_{local}} 
        \kronecker 
        \identity{n}
    \right)\permutation^{(T,n)}}_F^2
    \\ &= 
    \norm{\left(\identity{T}\kronecker W\right)\workloadgradient{T}
    \permutation^{(n,T)}
    \left(
        \identity{n}
        \kronecker 
        \pseudoinverse{C_{local}} 
    \right)}_F^2
    \\ &= 
    \norm{A
    \left(
        \identity{n}
        \kronecker 
        \pseudoinverse{C_{local}} 
    \right)}_F^2
    \\&= \trace\left(
            \transpose{\left[A \left(\identity{n} \kronecker C_{local}\right)\right]}
            A \left(\identity{n} \kronecker C_{local}\right)\right)
    \end{align*}
    If we consider $A_i = A_{[:,Ti:T(i+1)-1]}$, then $A\left[\identity{n}\kronecker C_{local}\right] = \begin{bmatrix}
        A_1 C_{local} &\dots& A_n C_{local}
    \end{bmatrix}$.
    Thus, when considering the trace, the product will only focus on the blocks for a given node $i$, and there are no cross-terms:
    \begin{align*}
        \norm{\left(\identity{T}\kronecker W\right)\workloadgradient{T}
        \left(
            \pseudoinverse{C_{local}} 
            \kronecker 
            \identity{n}
        \right)}_F^2
        =& \trace\left(\sum_{i=1}^{n} \transpose{\left(A_i C_{local}\right)} A_i C_{local}\right)
        \\
        =& \trace\left(\sum_{i=1}^{n} \transpose{C_{local}}\transpose{A_i} A_i C_{local}\right)
        \\
        =& \trace\left( \transpose{C_{local}}\left(\sum_{i=1}^{n}\transpose{A_i} A_i\right) C_{local}\right).
    \end{align*}
    Taking $L$ such that $\transpose{L}L = \sum_{i=1}^{n}\transpose{A_i} A_i$ using the Cholesky decomposition yields the desired result.
\end{proof}

\end{document}